\documentclass{article}

\usepackage{microtype}
\usepackage{graphicx}
\usepackage{subcaption}
\usepackage{booktabs} 

\usepackage{hyperref}

\usepackage[preprint]{icml2026}

\usepackage{amsmath}
\usepackage{amssymb}
\usepackage{mathtools}
\usepackage{amsthm}

\usepackage{algorithm}
\usepackage{algorithmic}

\usepackage{xcolor}
\usepackage[most]{tcolorbox}

\definecolor{lightskyblue}{RGB}{230, 245, 255}

\usepackage[capitalize,noabbrev]{cleveref}

\theoremstyle{plain}
\newtheorem{theorem}{Theorem}[section]

\newtheorem{lemma}[theorem]{Lemma}

\theoremstyle{definition}

\theoremstyle{remark}

\usepackage[textsize=tiny]{todonotes}

\icmltitlerunning{Back to Basics: Revisiting Exploration in Reinforcement Learning for LLM Reasoning via Generative Probabilities}

\begin{document}

\twocolumn[
  \icmltitle{Back to Basics: Revisiting Exploration in Reinforcement Learning for LLM Reasoning via Generative Probabilities}



  \icmlsetsymbol{equal}{*}

  \begin{icmlauthorlist}
    \icmlauthor{Pengyi Li}{yyy}
    \icmlauthor{Elizaveta Goncharova}{yyy}
    \icmlauthor{Andrey Kuznetsov}{yyy}
    \icmlauthor{Ivan Oseledets}{sch}
  \end{icmlauthorlist}

  \icmlaffiliation{yyy}{FusionBrain Lab, Russia}
  \icmlaffiliation{sch}{Institute of Numerical Mathematics, Russia}

  \icmlcorrespondingauthor{Pengyi Li}{li.pengyi@fusionbrainlab.com}


  \vskip 0.3in
]



\printAffiliationsAndNotice{}  

\begin{abstract}
Reinforcement Learning with Verifiable Rewards (RLVR) has emerged as an indispensable paradigm for enhancing reasoning in Large Language Models (LLMs). However, standard policy optimization methods, such as Group Relative Policy Optimization (GRPO), often converge to low-entropy policies, leading to severe mode collapse and limited output diversity. We analyze this issue from the perspective of sampling probability dynamics, identifying that the standard objective disproportionately reinforces the highest-likelihood paths, thereby suppressing valid alternative reasoning chains.
To address this, we propose a novel Advantage Re-weighting Mechanism (ARM) designed to equilibrate the confidence levels across all correct responses. By incorporating Prompt Perplexity and Answer Confidence into the advantage estimation, our method dynamically reshapes the reward signal to attenuate the gradient updates of over-confident reasoning paths, while redistributing probability mass toward under-explored correct solutions. Empirical results demonstrate that our approach significantly enhances generative diversity and response entropy while maintaining competitive accuracy, effectively achieving a superior trade-off between exploration and exploitation in reasoning tasks. Empirical results on Qwen2.5 and DeepSeek models across mathematical and coding benchmarks show that ProGRPO significantly mitigates entropy collapse. Specifically, on Qwen2.5-7B, our method outperforms GRPO by 5.7\% in Pass@1 and, notably, by 13.9\% in Pass@32, highlighting its superior capability in generating diverse correct reasoning paths.
\end{abstract}

\section{Introduction}

In recent years, Large Language Models (LLMs) have made significant progress through Reinforcement Learning (RL)-driven post-training \citep{reinforcement, ppo, dpo, rlhf}, particularly in complex reasoning tasks. As a simple yet highly efficient training paradigm, Reinforcement Learning with Verifiable Rewards (RLVR) \citep{rlvrtulu, deepseekr1} leverages explicit verification signals to stably induce the generation of longer Chain-of-Thought (CoT) reasoning paths \citep{grpo, openaio1, team2025kimi}. This, in turn, leads to substantial performance gains on high-difficulty reasoning tasks, consistent with prior findings on the effectiveness of CoT reasoning and test-time scaling \citep{CoT, ttscaling}.

However, although RLVR can effectively improve task success rates, its training process is often accompanied by obvious entropy collapse and mode collapse phenomena, leading to reasoning paths generated by the model being highly concentrated on a few dominant solutions. This issue inherently stems from the reward-weighted likelihood maximization objective: this objective continuously amplifies the probability mass of high-reward trajectories during optimization, thereby compressing the probability space occupied by low-frequency but equally valid reasoning paths, weakening the model's exploration capabilities \citep{doseRLpassk}.

Addressing the aforementioned issues, existing works have attempted to mitigate mode collapse by introducing entropy regularization \citep{entropyR2, entropyR1, cui2025entropy, wang2025reinforcement}, clip-higher \citep{yu2025dapo}, dynamic clipping strategies \citep{yang2025dcpo}, or high-entropy token promotion mechanisms \citep{2080tokens}. However, these methods remain largely limited to local modifications within the reward maximization framework, making it difficult to fundamentally improve the ability to model diverse reasoning paths. On the other hand, recent research indicates that applying penalties only to incorrect trajectories while maintaining a relatively flat reward structure for correct ones can, to a certain extent, increase the diversity of the solution space \citep{negativeRL}. This phenomenon further demonstrates that the relative probability structure among reasoning paths plays a critical role in shaping the model's exploration behavior.

From the perspective of generative modeling, the reasoning process of LLMs is essentially a token-by-token probabilistic sampling process. When the policy becomes overly deterministic on a few trajectories, the diversity of the sampling distribution inevitably drops \citep{li2025confidence}. Based on this observation, we propose a new reinforcement learning paradigm, Probabilistic based GRPO (ProGRPO), which re-examines the construction of Advantage from the perspective of probability distributions.

Specifically, ProGRPO utilizes the internal probability signals of the prompt and the generated answer from the LLM, combined with verifiable rewards, to reshape the Advantage distribution, thereby implicitly reshaping the effective reward-weighted trajectory distribution optimized by the policy gradient. This re-weighting mechanism based on probability structure can effectively alleviate the entropy collapse problem and significantly enhance the diversity of reasoning paths and training stability.

Our main contributions include:

\begin{itemize}
    \item \textbf{Methodology:} We propose ProGRPO, a principled extension of GRPO that incorporates a novel \emph{Advantage Re-weighting Mechanism} (\textbf{ARM}). By introducing confidence-aware signals into the advantage function, our method achieves targeted exploration without compromising training stability.
    \item \textbf{Broad Effectiveness:} We validate \textbf{ProGRPO} across diverse reasoning and code generation benchmarks using Qwen2.5 (7B, 32B) and DeepSeek models. Our method consistently outperforms mainstream baselines like GRPO and FlowRL, demonstrating strong scalability and generalization across different model sizes.

    \item \textbf{OOD Robustness:} Beyond standard benchmarks, \textbf{ProGRPO} exhibits superior Out-of-Distribution (\textbf{OOD}) adaptability, maintaining robust performance on unseen data distributions.

    \item \textbf{Significant Gains:} \textbf{ProGRPO} substantially enhances both accuracy and output diversity. On Qwen2.5-7B, it improves Pass@1 and Pass@32 by \textbf{5.7\%} and \textbf{13.9\%} respectively over GRPO (and 8.0\% / 7.5\% over FlowRL), highlighting its superior exploration efficiency.
\end{itemize}
\section{Preliminaries}

\subsection{REINFORCE}
REINFORCE \citep{reinforcement} is the classic policy gradient algorithm. Its objective is to maximize the expected cumulative reward. However, the gradient estimation suffers from high variance, leading to training instability.
\begin{equation}
\mathcal{J}_{\text{REINFORCE}}(\theta) = \mathbb{E}_{\tau \sim \pi_\theta} \big[ R(\tau) \, \log \pi_\theta(\tau) \big]
\end{equation}
\subsection{Proximal Policy Optimization (PPO)}
PPO \citep{ppo} stabilizes training by enforcing a trust region constraint via a clipping mechanism, which restricts the size of the policy update.
\begin{equation}
\begin{split}
& \mathcal{J}_{\text{PPO}}(\theta) = \mathbb{E}_{q \sim D, o \sim \pi_{\theta_{\text{old}}}} \\
& \Bigg[ \frac{1}{|o|} \sum_{t=1}^{|o|} \min \bigg( r_t(\theta) A_t, \\
& \operatorname{clip}\big(r_t(\theta), 1-\epsilon, 1+\epsilon\big) A_t \bigg) \Bigg]
\end{split}
\end{equation}
PPO requires an additional value function (Critic) to estimate the advantage $A_t$, which incurs significant computational overhead.

\subsection{Group Relative Policy Optimization (GRPO)}
GRPO \citep{grpo} eliminates the need for a value function by using verifiable rewards and group-based relative advantages. It samples a group of outputs $\{o_i\}_{i=1}^G$ for each query $q$ and uses the group mean as the baseline.
\begin{equation}
\begin{split}
& \mathcal{J}_{\text{GRPO}}(\theta) = \mathbb{E}_{q \sim D, \{o_i\}_{i=1}^G \sim \pi_{\theta_{\text{old}}}} \\
& \Bigg[ \frac{1}{G} \sum_{i=1}^G \frac{1}{|o_i|} \sum_{t=1}^{|o_i|} \min \bigg( r_{i,t}(\theta) A_i, \\
& \operatorname{clip}\big(r_{i,t}(\theta), 1-\epsilon, 1+\epsilon\big) A_i \bigg) \Bigg]
\end{split}
\end{equation}
where the advantage $A_i$ is computed by normalizing the rewards within the group: $A_i = \frac{R_i - \text{mean}(R)}{\text{std}(R)}$.
\section{Methodology}

\subsection{Advantage Re-weighting Mechanism (AMR)}

We redefine the advantage by incorporating sample-level signals derived from the model itself, using the low-probability token length normalized likelihood as a confidence score.
\begin{equation} \label{main_equa}
    \tilde{A}_{i} =
    \begin{cases} 
        A_{i}, \\
        \quad \text{if } \sum_{k=1}^{G} r_{i,k} = 0 \\[1em]
        A_{i} + \alpha \left(c_\theta(q_i) - c_\theta(o_{i} \mid q_i)\right), \\ 
        \quad \text{otherwise}
    \end{cases}
\end{equation}
Here, $c(q_i)$ denotes the model's \textbf{confidence} on the current prompt. Rather than relying on heuristic assumptions, we ground this design in the principles of Curriculum Reinforcement Learning \citep{H2E}, which posits that the difficulty of training samples (ranging from simple to hard) significantly impacts model optimization. Consequently, we incorporate $c(q_i)$ as a dynamic control term to regulate the training process based on the model's familiarity with the prompt. 
\begin{equation} \label{prompt_conf}
    c_\theta(q_i)
    =
    \exp\!\left(
    \frac{1}{|\mathcal{T}_i^{\text{low}}|}
    \sum_{t \in \mathcal{T}_i^{\text{low}}}
    \log p_\theta\!\left(q_{i,t} \mid q_{i,<t}\right)
    \right)
\end{equation}
$c_\theta(o_j \mid q_i)$ represents the model's \textbf{confidence} in generating the answer $o_j$ for prompt $q_i$.
\begin{equation} \label{answer_conf}
    c_\theta(o_j \mid q_i)
    =
    \exp\Bigg(
    \frac{1}{|\mathcal{T}_i^{\text{low}}|}
    \sum_{t \in \mathcal{T}_i^{\text{low}}} 
    \log p_\theta\big(o_{j,t} \mid q_i,\, o_{j,<t}\big)
    \Bigg)
\end{equation}
This Advantage reweighting indirectly reshapes the effective reward distribution, allowing us to score reasoning trajectories rather than simply maximize rewards. The reason for not modifying the reward directly is that within a group, all answers might be correct or incorrect; directly penalizing or rewarding them could distort the update signal, compromising the stability and effectiveness of model training. By adjusting the Advantage instead, we preserve meaningful gradient signals while encouraging diverse and accurate reasoning paths.

\subsection{Low-Probability Token Length Normalization}

We observe that applying length normalization to the full sequence likelihood can be suboptimal for reward modeling. Following the insights from \cite{2080tokens}, predictive uncertainty is typically concentrated in a small fraction of generation steps. Specifically, roughly 20\% of token positions substantially influence the subsequent reasoning path, while at the remaining positions, the model's next-token distribution is sharply peaked, with the top candidate often receiving a probability above 0.9. 

In our framework, applying length normalization over the entire sequence would disproportionately dilute the reward signal by including these "trivial" high-confidence tokens, leading to a weak and less informative training signal. To mitigate this, we define a critical subset of tokens, denoted as $\mathcal{T}^{\text{low}}_{o_i}$, which comprises the approximately 20\% of positions in the response $o_i$ that exhibit the highest predictive uncertainty. Consequently, we apply this selective length normalization to the confidence scores formulated in Equations~\ref{prompt_conf} and \ref{answer_conf}. 

By focusing on this informative subset, we preserve meaningful confidence variations that more directly reflect the model's reasoning quality, thereby providing a more robust signal for policy optimization.

\subsection{ProGRPO}

Finally, our overall objective function is given by Equation~\ref{Eq_ProGRPO}.
\begin{equation} \label{Eq_ProGRPO}
\begin{split}
& \mathcal{J}_{\text{ProGRPO}}(\theta) = \mathbb{E}_{(q,a)\sim\mathcal{D}, \{o_i\}_{i=1}^G \sim \pi_{\theta_{\text{old}}}(\cdot|q)} \\
& \Bigg[ \frac{1}{\sum_{i=1}^G |o_i|} \sum_{i=1}^G \sum_{t=1}^{|o_i|} \min \bigg( r_{i,t}(\theta) \tilde{A}_{i}, \\
& \operatorname{clip}\big(r_{i,t}(\theta), 1-\varepsilon_{\text{low}}, 1+\varepsilon_{\text{high}}\big) \tilde{A}_{i} \bigg) \Bigg]
\end{split}
\end{equation}

The final pseudocode is presented in Algorithm~\ref{alg:progrpo}. A detailed theoretical justification of ProGRPO is provided in
Appendix~\ref{Theoretical_Justification}.

\begin{algorithm}[h!]
\caption{ProGRPO: Probabilistic Group Relative Policy Optimization}
\label{alg:progrpo}
\begin{algorithmic}[1]
    \STATE {\bfseries Input:} Dataset $\mathcal{D}$, Policy Model $\pi_\theta$, Reference Model $\pi_{\text{ref}}$
    \STATE {\bfseries Hyperparams:} Group size $G$, Learning rate $\eta$, Weight $\alpha$, Clip $\varepsilon_{\text{low}}, \varepsilon_{\text{high}}$
    
    \WHILE{not converged}
        \STATE Sample batch of prompts $Q \sim \mathcal{D}$
        \STATE Initialize batch loss $L_{\text{batch}} = 0$
        
        \FOR{each prompt $q$ in $Q$}
            \STATE \textbf{1. Sampling Phase}
            \STATE Generate $G$ outputs $\{o_1, \dots, o_G\}$ from $\pi_{\theta_{\text{old}}}(\cdot \mid q)$
            \STATE Compute rewards $R = \{r_1, \dots, r_G\}$
            
            \STATE \textbf{2. Standard GRPO Advantage}
            \STATE Compute $\mu = \text{mean}(R)$ and $\sigma = \text{std}(R)$
            \STATE $A_i = \frac{r_i - \mu}{\sigma + \delta}$ for $i \in \{1 \dots G\}$
            
            \STATE \textbf{3. Advantage Re-weighting (AMR)}
            \IF{$\sum_{k=1}^G r_k = 0$ \textbf{or} $\sum_{k=1}^G r_k = G$}
                \STATE $\tilde{A}_i \leftarrow A_i$ for all $i \in \{1 \dots G\}$
            \ELSE
                \STATE \textit{// Calculate Prompt Confidence (Eq. ~\ref{prompt_conf})}
                \STATE Identify low-prob tokens $\mathcal{T}^{\text{low}}_q$ in prompt $q$
                \STATE Compute $c_\theta(q)$

                \FOR{$i = 1$ \textbf{to} $G$}
                    \STATE \textit{// Calculate Answer Confidence (Eq. ~\ref{answer_conf})}
                    \STATE Identify low-prob tokens $\mathcal{T}^{\text{low}}_{o_i}$ in answer $o_i$
                    \STATE Compute $c_\theta(o_i \mid q)$
                    \STATE \textit{// Apply Re-weighting (Eq. ~\ref{main_equa})}
                    \STATE $\tilde{A}_i \leftarrow A_i + \alpha \cdot (c_\theta(q) - c_\theta(o_i \mid q))$
                \ENDFOR
            \ENDIF
            
            \STATE \textbf{4. Loss Computation (Eq. ~\ref{Eq_ProGRPO})}
            \STATE Initialize prompt loss $L_q = 0$
            \FOR{$i = 1$ \textbf{to} $G$}
                \FOR{$t = 1$ \textbf{to} $|o_i|$}
                    \STATE Ratio $r_{i,t}(\theta) = \frac{\pi_\theta(o_{i,t} \mid q, o_{i,<t})}{\pi_{\theta_{\text{old}}}(o_{i,t} \mid q, o_{i,<t})}$
                    \STATE $L_{\text{surr}} = \min\Big( r_{i,t}(\theta) \tilde{A}_i, $
                    \STATE \quad $\operatorname{clip}\big(r_{i,t}(\theta), 1-\varepsilon_{\text{low}}, 1+\varepsilon_{\text{high}}\big) \tilde{A}_i \Big)$
                    \STATE $L_q \leftarrow L_q + L_{\text{surr}}$
                \ENDFOR
            \ENDFOR
            \STATE $L_q \leftarrow \frac{1}{\sum_{i=1}^G |o_i|} L_q$ 
            \STATE $L_{\text{batch}} \leftarrow L_{\text{batch}} + L_q$
        \ENDFOR
        \STATE Update parameters $\theta$ by minimizing $-L_{\text{batch}}$
    \ENDWHILE
\end{algorithmic}
\end{algorithm}

\section{Experiments}

\subsection{Experimental Settings}

\begin{table}[ht!] 
\centering
\caption{Training Hyperparameters} \label{Traing_hyperparameters}
\begin{tabular}{l l}
\toprule
\textbf{Hyperparameter} & \textbf{Value} \\
\midrule
Advantage Estimator & GRPO \\
Use KL Loss & No \\
Use Entropy Regularization & No \\
Train Batch Size & 512 \\
Max Response Length & 8092 \\
PPO Mini-batch Size & 32 \\
Clip Ratio Range & [0.8, 1.28] \\
Learning Rate & $1 \times 10^{-6}$ \\
Sampling Temperature & 1.0 \\
Number of Rollouts ($N$) & 8 \\
Reward Function & DAPO \cite{yu2025dapo} \\
\bottomrule
\end{tabular}
\end{table}

\textbf{Training Setup.}
Our experiments are conducted under the GRPO framework; detailed hyperparameter settings are provided in Table~\ref{Traing_hyperparameters}.

We do not employ any specially designed or task-specific prompts in this work. All models are trained and evaluated using the default prompting setup of the underlying language model.

\textbf{Training Dataset.}
We conduct experiments in two domains: mathematics and code generation. For mathematics, we train on the \textbf{DAPO} dataset \citep{yu2025dapo}. For code generation, we use the training split of the \textbf{DeepCoder} dataset \citep{luo2025deepcoder}.

\textbf{Base Models.}
Our experiments involve multiple model scales and families. We utilize \textbf{Qwen2.5-7B, Qwen2.5-32B} \citep{qwen2.5} and \textbf{DeepSeek-R1-Distill-Qwen-1.5B} \cite{deepseekr1} for general reasoning tasks, while leveraging \textbf{DeepSeek-R1-Distill-Qwen-7B} \cite{deepseekr1} specifically for the code domain.

\subsection{Evaluation}

\textbf{Math Domain.} We evaluate our method on a set of widely used mathematical reasoning benchmarks, including AIME2024 \citep{aime24&amc23}, AIME2025 \citep{aime25}, AMC23 \citep{aime24&amc23}, MATH500 \citep{math500}, Minerva \citep{Minerva_Math}, and OlympiadBench \citep{olympiadbench}. 

\textbf{Code Domain.} For code-related tasks, we conduct experiments on LiveCodeBench \citep{livecodebench}, CodeForces \citep{penedo2025codeforces}, and HumanEval+ \citep{humaneval}. 

For \textbf{out-of-distribution (OOD) evaluation} in the general domain, we use \textbf{GPQA} \citep{rein2024gpqa} and \textbf{MMLU-Pro} \citep{wang2024mmlu}.

During the evaluation, we set the sampling temperature to 0.6 and top-p to 0.95. We report performance using Pass@1 and Pass@k as the primary evaluation metrics.

For the Qwen2.5 series \citep{bai2023qwen}, we perform evaluations on reasoning benchmarks with a maximum output length of 8K tokens. For the DeepSeek-R1-Distill-Qwen series \citep{deepseekr1}, we use a maximum output length of 8K tokens for code-domain tasks and 32K tokens for reasoning tasks, reflecting their extended context capabilities.

\subsection{Results}

\begin{table*}[htp!]
    \centering
    \caption{Main results across six mathematical reasoning benchmarks. Each cell reports Pass@1 / Pass@32 (\%). GRPO w/ KL-Cov \citep{cui2025entropy} exhibits high sensitivity to optimization hyperparameters. Although all experiments are conducted using the original implementation without modifications, training instability is occasionally observed, which negatively affects the final results.}
    \label{main_math_results}
    \resizebox{\textwidth}{!}{%
    \begin{tabular}{llccccccc}
        \toprule
        \textbf{Method} & \textbf{AIME 2024} & \textbf{AIME 2025} & \textbf{AMC 23} & \textbf{MATH 500} & \textbf{Minerva} & \textbf{OlympiadBench} & \textbf{Average} \\
        \midrule
        \multicolumn{8}{c}{\textbf{Qwen2.5-7B (Max Response Len = 8K tokens)}} \\
        \midrule
        Baseline 
        & 5.4 / 30.0 
        & 2.5 / 20.0 
        & 32.4 / 82.5 
        & 54.7 / 92.6 
        & 22.0 / 54.4
        & 24.6 / 63.1 
        & 23.6 / 57.1 \\
        GRPO
        & 9.2 / 26.7
        & 6.1 / 30.0
        & 65.5 / 80.0
        & 75.3 / 87.0
        & \textbf{33.8} / 51.1
        & 35.6 / 53.6
        & 37.6 / 54.7 \\
        GRPO w KL-Cov
        & 6.1 / 26.7
        & 8.4 / 36.7
        & 46.3 / 80.0
        & 56.9 / 87.6
        & 23.2 / \textbf{54.8}
        & 25.7 / 60.2
        & 27.8 / 57.7 \\
        FlowRL
        & 14.6 / 33.3 
        & 10.4 / 40.0 
        & 54.0 / 85.0
        & 66.9 / 87.0 
        & 30.9 / 54.4
        & 35.0 / 60.7 
        & 35.3 / 61.0 \\
        \textbf{ProGRPO (Ours)} 
        & \textbf{21.3} / \textbf{53.3}
        & \textbf{15.9} / \textbf{50.0}
        & \textbf{67.2} / \textbf{92.5}
        & \textbf{80.5} / \textbf{94.2}
        & 32.0 / 53.3
        & \textbf{42.7} / \textbf{67.5}
        & \textbf{43.3} / \textbf{68.5} \\
        
        \midrule
        \multicolumn{8}{c}{\textbf{Qwen2.5-32B (Max Response Len = 8K tokens)}} \\
        \midrule
        
        Baseline
        & 5.7 / 43.3
        & 2.2 / 26.7
        & 29.7 / 87.5
        & 52.0 / 91.8
        & 27.3 / 59.6
        & 22.4 / 68.4
        & 23.2 / 62.9 \\

        GRPO
        & 16.8 / 43.3
        & 13.9 / 26.7
        & 80.1 / 95.0
        & \textbf{84.7} / 92.4
        & \textbf{42.3} / 55.9
        & 49.7 / 65.1
        & 47.9 / 63.1 \\
        
        \textbf{ProGRPO (Ours)}
        & \textbf{34.6} / \textbf{60.0}
        & \textbf{24.8} / \textbf{46.7}
        & \textbf{80.6} / \textbf{97.5}
        & 82.8 / \textbf{96.2}
        & 39.3 / \textbf{58.5}
        & \textbf{54.1} / \textbf{74.5}
        & \textbf{52.7} / \textbf{72.2} \\
        
        \midrule
        \multicolumn{8}{c}{\textbf{DeepSeek-R1-Distill-Qwen-1.5B (Max Response Len = 32K tokens)}} \\
        \midrule
        
        Baseline
        & 31.6 / 76.7
        & 24.9 / 60.0
        & 71.6 / 95.0
        & 74.3 / 95.6
        & 26.1 / 56.3
        & 49.1 / 80.1
        & 46.3 / 77.3 \\

        GRPO
        & 26.6 / 70.0
        & 28.0 / 56.7
        & 77.9 / \textbf{97.5}
        & 80.6 / 96.6
        & 29.9 / 56.3
        & 53.1 / 79.2
        & 49.4 / 76.1 \\

        \textbf{ProGRPO (Ours)} 
        & \textbf{46.0} / \textbf{76.7}
        & \textbf{33.6} / \textbf{60.0}
        & \textbf{86.3} / 95.0
        & \textbf{87.0} / \textbf{98.2}
        & \textbf{34.8} / \textbf{58.1}
        & \textbf{62.0} / \textbf{81.8}
        & \textbf{58.3} / \textbf{78.3} \\
        
        \bottomrule
    \end{tabular}%
    }
\end{table*}

\begin{table}[!htb]
\centering
\caption{Performance comparison on code reasoning benchmarks. All models are evaluated with a maximum response length of 8K tokens. * We state that FlowRL's results were reproduced using weights released by Hugging Face, however, this does not affect the overall conclusions of our study.}
\label{main_code_results}
\resizebox{\columnwidth}{!}{%
\begin{tabular}{lccccc}
\toprule
\textbf{Models} 
& \multicolumn{2}{c}{\textbf{LiveCodeBench}} 
& \multicolumn{2}{c}{\textbf{CodeForces}} 
& \textbf{HumanEval+} \\
& Avg@16 & Pass@16 & Rating & Pct. & Avg@16 \\
\midrule
\multicolumn{6}{c}{\textbf{DeepSeek-R1-Distill-Qwen-7B (Max Response Len = 8K tokens)}} \\
\midrule
Backbone 
& 30.96 
& 48.03 
& 764.93 
& 8.2 
& 80.33 \\

GRPO
& 34.94
& 53.76
& 1243.24
& 60.7
& 83.32\\

FlowRL
& 33.98
& 51.97
& 1129.58
& 48.5 
& 82.67 \\

\midrule
\textbf{ProGRPO (Ours)} 
& \textbf{36.47}
& \textbf{54.12}
& \textbf{1422.49}
& \textbf{75.4}
& \textbf{84.01} \\
\bottomrule
\end{tabular}%
}
\end{table}




Our primary experimental results are summarized in Tables~\ref{main_math_results} and~\ref{main_code_results}. Across both mathematical reasoning and code generation domains, \textbf{ProGRPO} consistently outperforms the direct reward maximization baseline (GRPO) as well as the reward matching approach proposed by FlowRL.

Mathematical reasoning.
As shown in Table~\ref{main_math_results}, \textbf{ProGRPO} achieves substantial improvements across all evaluated benchmarks and model scales. For Qwen2.5-7B, \textbf{ProGRPO} attains an average Pass@1 of 43.3\%, improving over GRPO by +5.7\% and over FlowRL by +8.0\%. The gains are even more pronounced in the multi-sample regime, where \textbf{ProGRPO} reaches an average Pass@32 of 68.5\%, surpassing GRPO and FlowRL by +13.8 and +7.5\%, respectively. Notably, large margins are observed on challenging benchmarks such as AIME 2024 (+12.1 Pass@1 over FlowRL) and OlympiadBench (+7.7 Pass@1 over FlowRL).

For Qwen2.5-32B, \textbf{ProGRPO} further scales favorably, achieving an average Pass@1 of 52.7\%, which is +4.8\% higher than GRPO. Similar trends are observed for DeepSeek-R1-Distill-Qwen-1.5B, where \textbf{ProGRPO} improves the average Pass@1 from 49.4\% to 58.3\%, demonstrating that our method remains effective even for smaller distilled models. Overall, these results indicate that \textbf{ProGRPO} delivers robust and consistent gains across model sizes and mathematical reasoning tasks.

Code generation.
Table~\ref{main_code_results} presents the results on code reasoning benchmarks. On LiveCodeBench, \textbf{ProGRPO} achieves an Avg@16 score of 36.47 and Pass@16 of 54.12, outperforming GRPO by +1.53 and +0.36, respectively. On CodeForces, \textbf{ProGRPO} yields a substantial improvement, reaching a rating of 1422.49, which exceeds GRPO by nearly +180 rating and FlowRL by +293, corresponding to a percentile increase to 75.4\%. Additionally, \textbf{ProGRPO} attains the best performance on HumanEval+, achieving an Avg@16 score of 84.01\%, further confirming its effectiveness in complex logic and syntax generation tasks.

\begin{figure*}[!htb]
    \centering
    \includegraphics[width=1\linewidth]{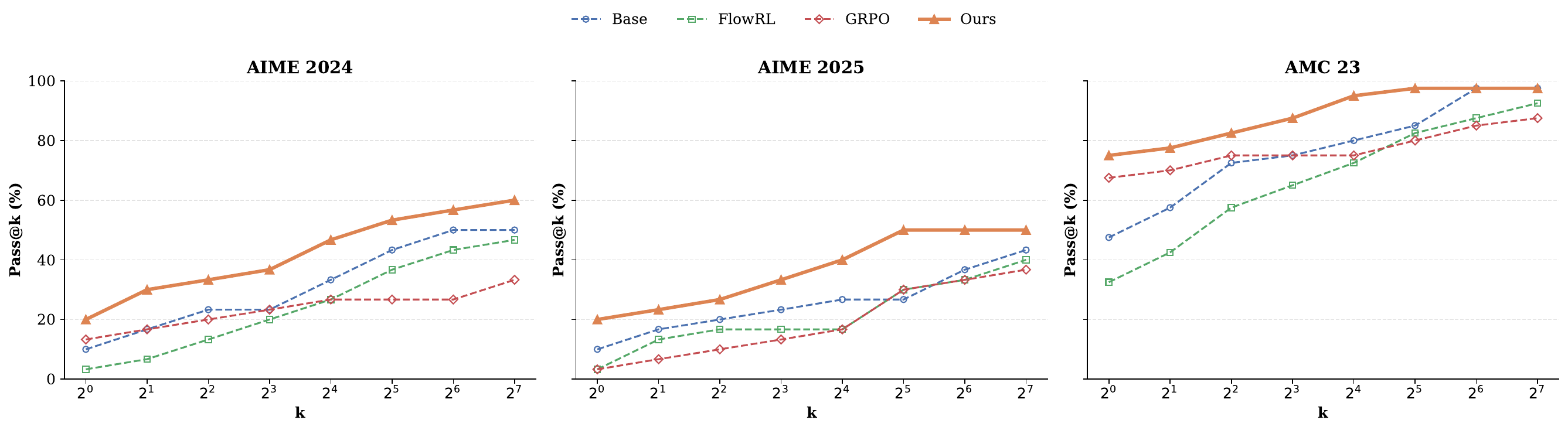}
    \caption{Pass@k comparison on AIME 2024, AIME 2025, and AMC 23 benchmarks using Qwen2.5-7B with FlowRL and GRPO and Ours.}
    \label{pass_at_k}
\end{figure*}

As shown in Figure~\ref{pass_at_k}, we further compare the Pass@K metric and observe that our method significantly surpasses the baseline base model across all evaluated settings.

\begin{table}[htbp]
\centering
\caption{Performance on out-of-distribution (OOD) general-domain benchmarks.}
\label{general_domain}
\small
\begin{tabular}{lcccc}
\toprule
\textbf{Models} & \textbf{MMLU-PRO} & \textbf{GPQA (Avg@4)} \\
\midrule
Qwen2.5-7B & 48.7 & 32.4 \\
+GRPO & 52.1 &  38.9 \\
\textbf{+ProGRPO (Ours)} & \textbf{54.3} & \textbf{42.3} \\
\bottomrule
\end{tabular}
\end{table}

In addition, we evaluate the generalization performance of our model under OOD settings. As shown in Table~\ref{general_domain}, our method still maintains a clear advantage over GRPO.

In summary, we have achieved consistent and significant performance improvements across varying model scales and architectures. These findings provide strong empirical support for our proposed confidence-based strategy, demonstrating its capability to substantially enhance both model diversity and generalization ability.

\subsection{Analysis of Training}

\begin{figure*}
    \centering
    \includegraphics[width=1\linewidth]{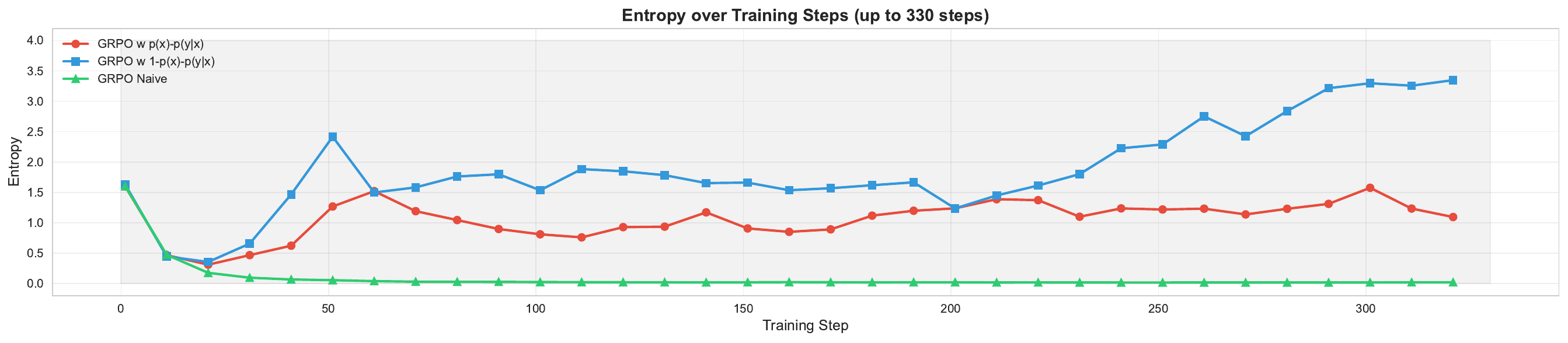}
    \caption{Training entropy across optimization steps for different methods. Higher entropy indicates increased exploration during policy optimization.}
    \label{training_entropy}
\end{figure*}

We continuously monitored the evolution of entropy during training. As shown in the Figure~\ref{training_entropy}, entropy first decreases, then increases, and eventually stabilizes. This behavior can be interpreted as follows: in the early stage of training, the model mainly focuses on learning a small number of correct answers, causing the predictive distribution to contract and entropy to decrease. As training progresses and the number of correct answers within a group increases, the model begins to allocate probabilities more evenly across samples, leading to a smoother output distribution and a corresponding rise in entropy, which eventually stabilizes. In contrast, GRPO consistently reinforces the most confident answers, driving probability mass toward a few samples and resulting in entropy collapse.

\begin{table}[htbp]
\centering
\caption{Dataset-level Evaluation on AIME 2024: Accuracy and Diversity of Correct Solutions}
\label{aime_eval_compare_with_GRPO}
\resizebox{\columnwidth}{!}{\begin{tabular}{lccccc}
\toprule
\textbf{Setting} & \textbf{Distinct-2} & \textbf{Self-BLEU} & \textbf{Semantic Cosine} \\
\midrule
GRPO & 0.1693 & 0.9299 & 0.9725 \\
\textbf{ProGRPO (Ours)} & \textbf{0.1443} & \textbf{0.6746} & \textbf{0.9233} \\
\bottomrule
\end{tabular}}
\end{table}

We also investigate how diverse are the generations after the model's training. Table~\ref{aime_eval_compare_with_GRPO} reports three diversity-related metrics — Distinct-2, Self-BLEU, and Semantic Cosine — computed at the dataset level for correct solutions on AIME 2024, where sentence representations are obtained using all-MiniLM-L6-v2 \citep{minilm}. Compared with the GRPO baseline, our method achieves substantially lower Self-BLEU and Semantic Cosine scores, indicating reduced lexical and semantic redundancy among generated solutions. Although Distinct-2 is slightly lower, this suggests that the improved diversity primarily stems from higher-level structural and semantic variation in reasoning rather than surface-level n-gram diversification. Overall, these results demonstrate that our approach encourages more diverse yet valid reasoning trajectories, consistent with the observed gains under higher-entropy decoding.

\begin{figure}[!ht]
    \centering
    \begin{subfigure}[b]{\linewidth}
        \centering
        \includegraphics[width=\linewidth]{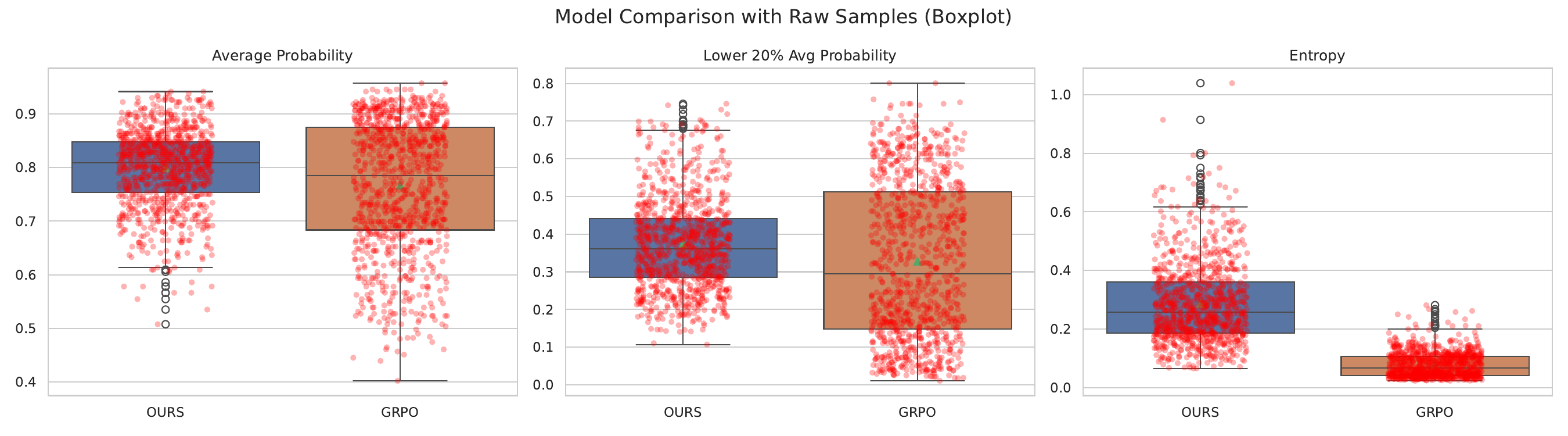}
        \caption{Boxplot comparison of model performance (OURS vs GRPO).}
        \label{boxplot_comparison}
    \end{subfigure}
    \hfill
    \begin{subfigure}[b]{\linewidth}
        \centering
        \includegraphics[width=\linewidth]{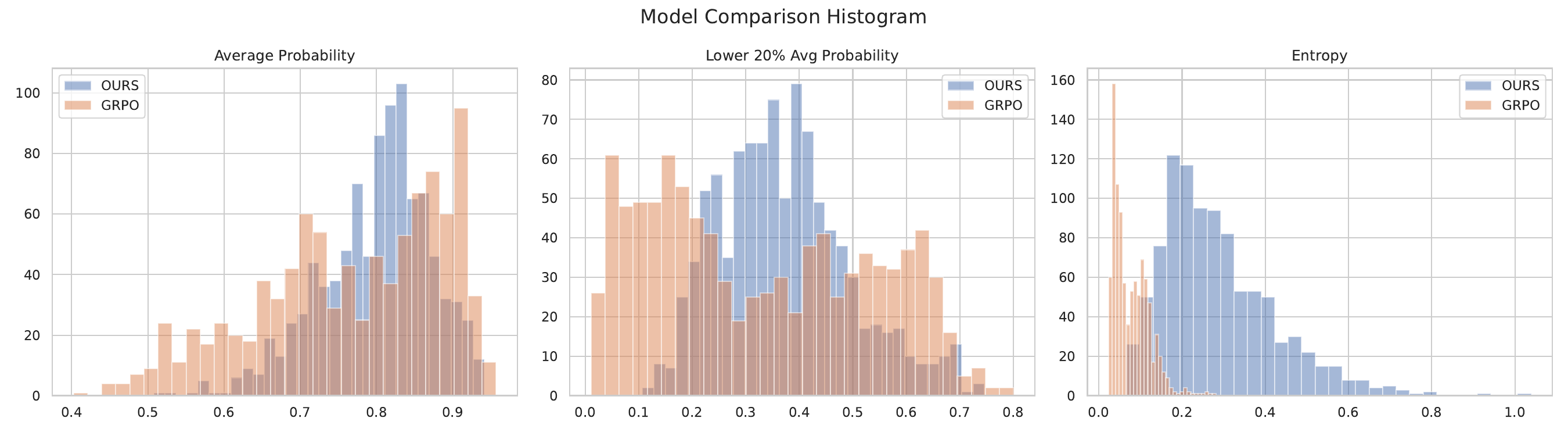}
        \caption{Histogram comparison of model performance (OURS vs GRPO).}
        \label{histogram_comparison}
    \end{subfigure}
    \caption{Comparison of model performance across three metrics (average probability, lower 20\% probability, and entropy), with statistics computed over 32 rollouts per sample using the AIME2024 dataset.}
    \label{model_comparison}
\end{figure}

Overall, ProGRPO method demonstrates superior performance to GRPO in both reliability and diversity: it achieves higher average probabilities, exhibits greater stability for low-probability tokens, and generates richer outputs. This conclusion aligns with the comparative results presented in Figure~\ref{boxplot_comparison} and Figure~\ref{histogram_comparison}.

\begin{figure}[!htb]
    \centering
    \begin{subfigure}[t]{0.48\linewidth}
        \centering
        \includegraphics[width=\linewidth]{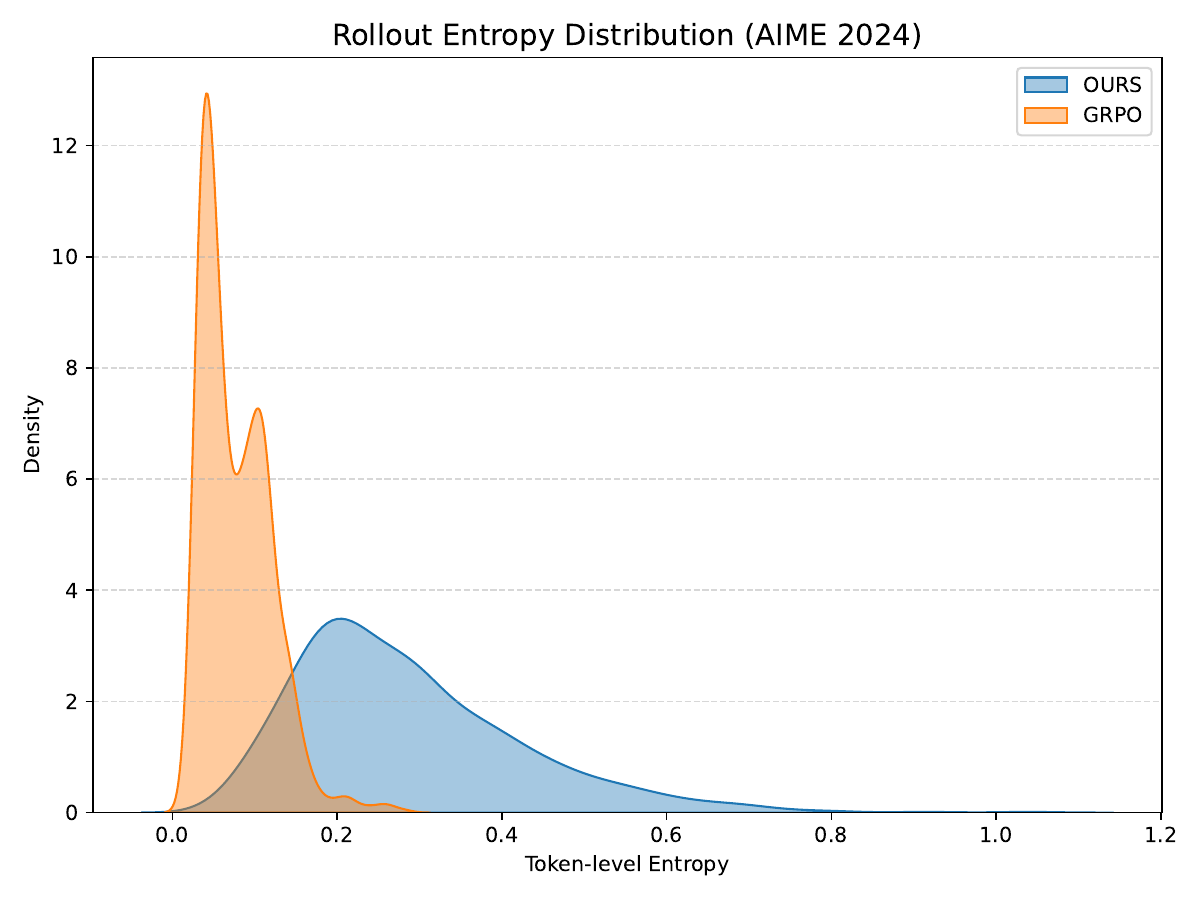}
        \caption{Kernel density estimation (KDE).}
        \label{entropy_kde}
    \end{subfigure}
    \hfill
    \begin{subfigure}[t]{0.48\linewidth}
        \centering
        \includegraphics[width=\linewidth]{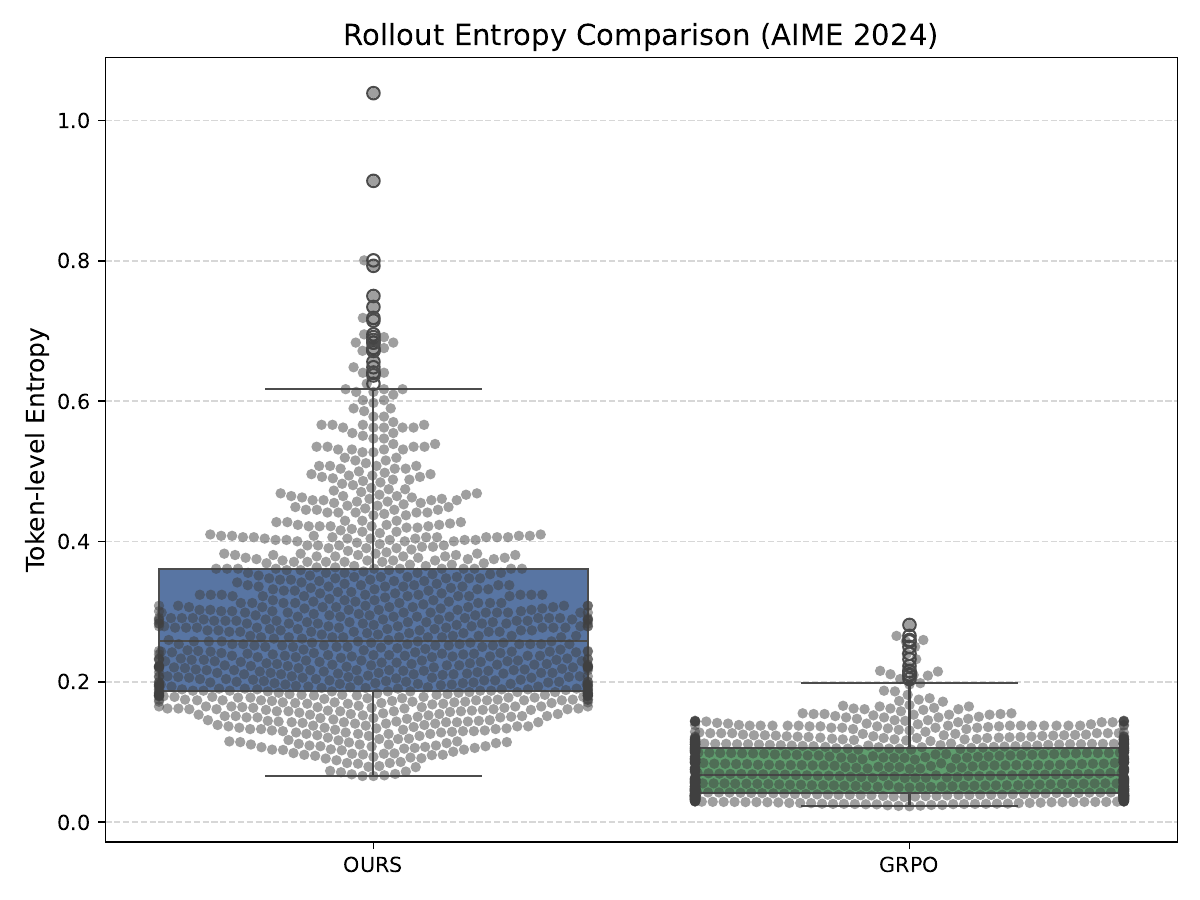}
        \caption{Boxplot of rollout entropy.}
        \label{entropy_boxplot}
    \end{subfigure}

    \caption{Comparison of rollout token-level entropy on AIME~2024 between OURS and the GRPO baseline.}
    \label{rollout_entropy_aime2024}
\end{figure}

As shown Figure~\ref{rollout_entropy_aime2024}, we conduct a systematic analysis of the output entropy on the AIME dataset. The results show that, compared with models trained using GRPO, our method significantly increases the entropy of the output distribution while maintaining comparable Pass@1 performance. Furthermore, when combined with the Pass@k metric, we observe that the model continues to achieve stable improvements in Pass@k under higher entropy levels. This indicates that the increased entropy does not arise from randomization, but rather from generating a more diverse set of valid reasoning paths while preserving solution correctness.

\subsection{Ablation}

\begin{figure}[!htb]
    \centering
    \includegraphics[width=1\linewidth]{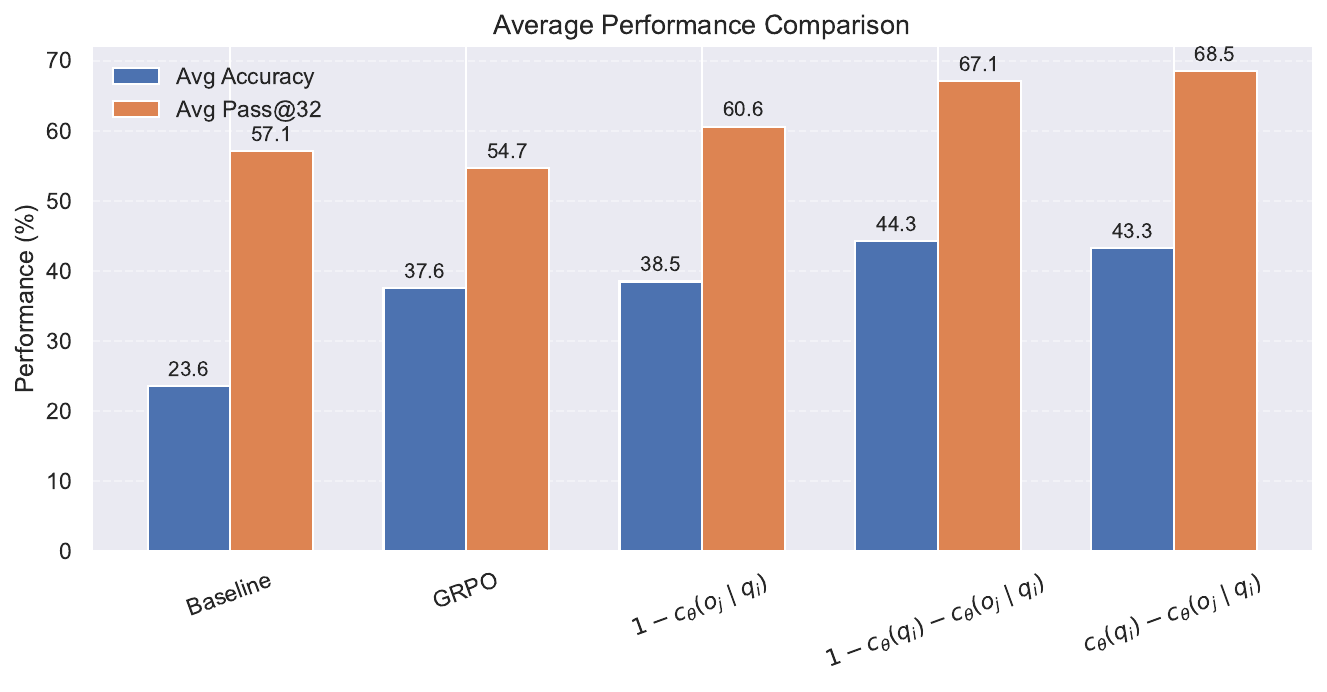}
    \caption{Ablation study of average pass@k performance under different advantage formulations.}
    \label{ablation_pic}
\end{figure}

As shown in Figure~\ref{ablation_pic}, compared with GRPO, our proposed algorithm consistently achieves stable and significant performance improvements across different models and advantage formulations. In particular, the effectiveness and robustness of our method are consistently validated when encouraging low-probability answers, as well as under advantage designs that combine perplexity — both low and high — with low-probability answer incentives.

\begin{table}[htbp]
\centering
\caption{Impact of the advantage reweighting coefficient $\alpha$ in Eq.~\ref{main_equa} on ProGRPO performance, averaged over six benchmarks.}
\label{ablation_alpha}
\begin{tabular}{lcccc}
\toprule
\textbf{Alpha} & \textbf{Pass@1} & \textbf{Pass@32} \\
\midrule
0 (Pure GRPO) & 37.6 & 54.7 \\
0.3 & \textbf{43.3} & \textbf{68.5} \\
0.7 & 40.9 & 59.8 \\
1.0 & 39.0 & 56.2 \\
\bottomrule
\end{tabular}
\end{table}

As shown in Table \ref{ablation_alpha}, the averaged results across six benchmarks demonstrate that the hyperparameter $\alpha$ in Equation X has a significant impact on model performance. When $\alpha=0$, the model reduces to pure GRPO and yields relatively weak performance. Increasing $\alpha$ to 0.3 introduces a moderate confidence-based advantage reweighting, leading to substantial improvements in both Pass@1 and Pass@32, which indicates more effective reinforcement of high-quality reasoning trajectories while maintaining training stability. However, further increasing $\alpha$ (e.g., to 0.7 or 1) degrades performance, suggesting that overly strong confidence signals can dominate the advantage function and weaken the supervision from the original reward. Overall, $\alpha=0.3$ achieves the best balance between performance and stability.

In summary, $c_\theta(q_i) - c_\theta(o_j \mid q_i)$ with $\alpha=0.3$ is a more robust choice, whereas $1 - c_\theta(q_i) - c_\theta(o_j \mid q_i))$ tends to be more exploratory, as it encourages larger updates on harder problems compared to easier ones.


\section{Related Work}

\textbf{Reasoning Models.} Recently, reinforcement learning–driven reasoning models have typically generated explicit and lengthy chains of thought before producing final answers, as exemplified by models such as OpenAI o1 \citep{openaio1}, DeepSeek \citep{grpo}, Kimi \citep{team2025kimi}, and Qwen \citep{bai2023qwen} \citep{qwen2.5} \citep{qwen3}. Within this paradigm, reinforcement learning with verifiable rewards has become a dominant post-training approach, with widely adopted algorithms including GRPO \citep{grpo}, GSPO \citep{gspo}, DAPO \citep{yu2025dapo}, and CISPO \citep{cispo}. However, the generalization ability of reasoning models remains a critical concern. Prior studies have shown that under the Pass@k metric, the performance advantage of RL-fine-tuned models over their base counterparts often diminishes—or even vanishes—as k increases \citep{doseRLpassk}. This phenomenon highlights a fundamental challenge arising from insufficient exploration mechanisms in reinforcement learning.

\textbf{Exploration in Reinforcement Learning.} Exploration and entropy collapse have long been central challenges in reinforcement learning. Prior work has explored the use of entropy-based signals \citep{entropy_perspective} to guide exploration, yet these approaches often yield limited empirical improvements. Other studies introduce entropy regularization \cite{cui2025entropy, wang2025reinforcement} to enhance exploratory behavior, but they still face challenges in terms of stability and effectiveness. Recently, FlowRL \citep{zhu2025flowrl} redefines the reward function to assign different scores to different reasoning paths (reward matching), thereby improving the model’s exploratory capabilities—while also effectively balancing the confidence across reasoning paths.

Inspired by research on model confidence \citep{li2025confidence} and FlowRL \citep{zhu2025flowrl}, we revisit the relationship between entropy collapse and confidence in policy learning. When the model is overly confident \citep{li2025confidence}, the policy tends to determinism, leading to entropy collapse. Motivated by this observation, we pose a complementary question: what happens when the model is insufficiently confident? To address this, we propose a confidence-balancing method based on the correctness of answers, which explicitly regulates model confidence and introduces a new research paradigm for achieving a better trade-off between exploration and stability.
\section{Conclusion}

In this paper, we propose \textbf{ProGRPO}, a novel algorithm approached from the perspective of generative probabilities, designed to address the severe entropy collapse phenomenon observed during RLVR training. By introducing Low-Probability Token Length Normalization and a confidence-aware Advantage Reweighting mechanism (ARM), ProGRPO effectively mitigates mode collapse while preserving the model's reasoning capabilities. 

Empirical results demonstrate that our method not only achieves competitive performance on standard metrics but also maintains a significant advantage in multi-sample settings (e.g., Pass@k), indicating a robust capability to generate diverse and correct solutions. Ultimately, ProGRPO presents a novel and effective solution to the fundamental Exploration-Exploitation trade-off in LLM reasoning tasks, paving the way for more stable and exploratory reinforcement learning paradigms.

\section*{Impact Statement}

This paper aims to advance the field of Machine Learning by improving the stability and diversity of reinforcement learning with verifiable rewards for large language models. Our proposed method focuses on mitigating mode collapse during policy optimization, thereby encouraging more diverse and robust reasoning behaviors without introducing new model capabilities or application domains.

The techniques presented in this work are primarily methodological and are intended to enhance existing training paradigms for reasoning and code generation tasks. We do not anticipate any significant negative societal or ethical consequences beyond those commonly associated with large language models, such as issues related to misuse or over-reliance, which are not exacerbated by our approach. Overall, we believe that this work contributes positively to the reliability and robustness of machine learning systems.

\bibliography{example_paper}
\bibliographystyle{icml2026}

\newpage
\appendix
\onecolumn

\section{Theoretical Justification} \label{Theoretical_Justification}

In this section, we provide a rigorous mathematical justification for the proposed ProGRPO framework. Under the RLVR (Reinforcement Learning with Verifiable Rewards) setting, we demonstrate how ProGRPO overcomes the fundamental limitations of standard GRPO through its Adaptive Margin Reward (AMR) mechanism.

\subsection{Preliminaries and the Homogeneity Limitation}

Let $q$ be a prompt sampled from a dataset $\mathcal{D}$, and $\pi_\theta$ be the policy model. In each iteration, GRPO samples a group of $G$ outputs $\mathcal{G} = \{o_1, \dots, o_G\}$.
\begin{itemize}
    \item \textbf{Binary Rewards:} $r(o) \in \{0, 1\}$.
    \item \textbf{Correctness Subsets:} $\mathcal{O}^+ = \{o \in \mathcal{G} \mid r(o)=1\}$ with cardinality $K = |\mathcal{O}^+|$.
    \item \textbf{Standard GRPO Statistics:} The mean reward $\mu$ and standard deviation $\sigma$ are:
    \begin{equation}
        \mu = \frac{K}{G}, \quad \sigma = \sqrt{\frac{K}{G}\left(1 - \frac{K}{G}\right)} + \epsilon
    \end{equation}
\end{itemize}

\begin{lemma}[Homogeneity of Advantage]
\label{lemma:homogeneity}
In standard GRPO, for any two distinct correct responses $o_i, o_j \in \mathcal{O}^+$, the advantage values are identical:
\begin{equation}
    A(o_i) = A(o_j) = \frac{1 - \mu}{\sigma} \triangleq A_{\text{pos}} > 0
\end{equation}
\end{lemma}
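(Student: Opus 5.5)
The plan is to compute the GRPO advantage directly from its definition $A_i = (R_i - \mathrm{mean}(R))/\mathrm{std}(R)$ under the binary-reward setting of the preliminaries. The key observation is that the advantage of a response depends on that response only through its reward $r(o_i)$. The group statistics $\mu$ and $\sigma$ are shared by every member of $\mathcal{G}$, so they do not depend on which response is being scored.

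First, I will write the group mean and standard deviation in terms of $K = |\mathcal{O}^+|$. Since $r(o)\in\{0,1\}$, the sum of rewards is $K$, which gives $\mu = K/G$. Since $r(o)^2 = r(o)$, the population variance is $\frac{1}{G}\sum_i r_i^2 - \mu^2 = \mu - \mu^2 = \frac{K}{G}(1-\frac{K}{G})$. This recovers the stated $\sigma$, with $\epsilon$ added for numerical stability.

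Next, I fix any $o_i, o_j \in \mathcal{O}^+$. By definition $r(o_i) = r(o_j) = 1$, and substituting into the advantage formula gives $A(o_i) = (1-\mu)/\sigma = A(o_j)$. The distinctness of $o_i$ and $o_j$ as token sequences plays no role, because nothing else about the response enters the formula. This step is trivial once we note that $A$ is a function of the reward value alone.

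The only point needing care is the strict positivity $A_{\text{pos}} > 0$. I will assume the group is mixed, i.e.\ $1 \le K \le G-1$, which is the case in which ProGRPO's reweighting is active according to Algorithm~\ref{alg:progrpo}.
\begin{itemize}
\item Under this assumption $\mu < 1$, so the numerator $1-\mu$ is positive.
\item Also $\sigma \ge \epsilon > 0$, so the denominator is positive and the quotient is well defined.
\item If $K = G$, then $1-\mu = 0$ and the advantage is $0$, not strictly positive. I will therefore state the restriction $K < G$ explicitly.
\item If $K = 0$, then $\mathcal{O}^+$ is empty and the claim holds vacuously.
\end{itemize}
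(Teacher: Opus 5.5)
Your proposal is correct and follows the paper's argument: $A=(r-\mu)/\sigma$ depends on a response only through its reward, so every $o\in\mathcal{O}^+$ receives the same value $(1-\mu)/\sigma$. Your explicit positivity check adds a needed caveat that the paper's proof omits. The strict inequality requires a mixed group with $K<G$, since for $K=G$ one gets $A_{\text{pos}}=0$.
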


\begin{proof}
Since $r(o_i) = r(o_j) = 1$, the linear transformation $A = (r - \mu)/\sigma$ maps all elements of $\mathcal{O}^+$ to the same scalar $A_{\text{pos}}$. Consequently, the gradient update $\nabla_\theta \mathcal{J} \propto \sum A(o) \nabla_\theta \log \pi_\theta(o)$ treats all correct trajectories indiscriminately. If $\pi_\theta(o_i) > \pi_\theta(o_j)$ initially, the model enters a positive feedback loop, exponentially increasing $\pi_\theta(o_i)$ while suppressing other valid paths $o_j$. This leads to \textbf{Entropy Collapse}.
\end{proof}

\subsection{Theorem 1: Convergence to Confidence Equilibrium and Difficulty Calibration}

The ProGRPO advantage for $o_i \in \mathcal{O}^+$ is defined as:
\begin{equation}
    \tilde{A}(o_i) = A_{\text{pos}} + \alpha \left( \bar{c}_\mathcal{G} - c_\theta(o_i \mid q) \right)
\end{equation}
where $\bar{c}_\mathcal{G} = \frac{1}{G} \sum_{j=1}^G c_\theta(o_j \mid q)$ is the prompt-specific group baseline.

\begin{theorem}
\label{thm:equilibrium}
The AMR mechanism stabilizes policy dynamics within $\mathcal{O}^+$ by: (i) inducing a Maximum Entropy state through negative feedback, and (ii) eliminating the difficulty bias across different prompts.
\end{theorem}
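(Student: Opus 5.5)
We prove the two claims separately. Both rest on one simplification: inside $\mathcal{O}^+$, what matters is the deviation of $\tilde{A}$ from its mean over $\mathcal{O}^+$, and that deviation depends only on the confidences $c_i \triangleq c_\theta(o_i\mid q)$.

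\textbf{Part (i): negative feedback and the equilibrium.}
By Lemma~\ref{lemma:homogeneity}, every $o_i\in\mathcal{O}^+$ has the same GRPO term $A_{\text{pos}}$. Since $\bar{c}_\mathcal{G}$ is also shared, the only part of $\tilde{A}(o_i)$ that varies within $\mathcal{O}^+$ is $-\alpha c_i$. Write the restricted policy-gradient direction as
\[
\nabla_\theta \mathcal{J}\big|_{\mathcal{O}^+}\propto \sum_{o_i\in\mathcal{O}^+}\Big(A_{\text{pos}}+\alpha(\bar{c}_\mathcal{G}-c_i)\Big)\nabla_\theta\log\pi_\theta(o_i).
\]
To make "negative feedback" precise, adopt a tabular/softmax idealization: the conditional distribution over correct trajectories is $p_i\propto e^{z_i}$ with independent logits $z_i$. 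Treat $c_i$ as a strictly increasing function of $z_i$. This is justified because $c_i$ is a length-normalized geometric mean of low-probability token likelihoods, so it moves monotonically with $\log\pi_\theta(o_i)$ when the updates are confined to those tokens.

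Under this model, the logit dynamics read $\dot z_i \propto p_i\big(A_{\text{pos}}+\alpha(\bar c-c_i)\big)$ minus a common shift. The relative dynamics follow:
\[
\frac{d}{dt}(z_i-z_j)\propto \alpha\,(c_j-c_i)+(\text{terms from }p_i\neq p_j).
\]
Since $c$ is monotone in $z$, the first term has the sign of $-(z_i-z_j)$. So the ordering of confidences is self-correcting, in direct contrast to the positive feedback loop described in the proof of Lemma~\ref{lemma:homogeneity}.

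To turn this into convergence, use the Lyapunov function $V=\sum_{i\in\mathcal{O}^+}(c_i-\bar c^+)^2$, where $\bar c^+$ is the mean over $\mathcal{O}^+$. Showing $\dot V\le 0$, with equality iff all $c_i$ are equal, gives convergence to the fixed point $c_i\equiv$ const. That point corresponds to the uniform distribution over $\mathcal{O}^+$, which is the maximum-entropy distribution supported on the correct set.

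\textbf{Main obstacle.} The delicate step is $\dot V\le0$, because the REINFORCE weight $p_i$ multiplies the correction, and the common $A_{\text{pos}}$ term pushes high-$p_i$ paths more strongly. My plan is to use the sufficient condition $\alpha\,\min_i \partial c_i/\partial z_i > A_{\text{pos}}\cdot(\text{spread of }p)$, i.e.\ to state (i) for $\alpha$ above a threshold that depends on $K/G$. Failing that, I would state (i) locally: linearize around the uniform point and show that the Jacobian restricted to the subspace orthogonal to $\mathbf{1}$ has negative eigenvalues $-\alpha p\,c'(z)$. The $A_{\text{pos}}$ contribution vanishes there, because it is constant across $\mathcal{O}^+$ and is absorbed into the normalization.

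\textbf{Part (ii): no difficulty bias.}
Note first that $\tilde{A}$ as defined here uses the group baseline $\bar c_\mathcal{G}$ rather than $c_\theta(q)$. Summing the correction over the group gives $\sum_{j=1}^G\alpha(\bar c_\mathcal{G}-c_j)=0$. Hence the extra term is zero-mean within every group, whatever the absolute confidence level of that prompt.

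Now consider two prompts, an easy one with uniformly high $c$ and a hard one with uniformly low $c$. Shifting all $c_j$ by a prompt-dependent constant $b(q)$ leaves every correction $\bar c_\mathcal{G}-c_i$ unchanged. So the reweighting responds only to within-prompt relative confidence and not to prompt difficulty, and the mean advantage per prompt remains that of GRPO.

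I would state this as an invariance: $\tilde A$ is unchanged under $c_j\mapsto c_j+b(q)$. The proof is a one-line computation. The remaining remark, that this calibration removes cross-prompt bias in gradient magnitude, follows because $\mathbb{E}[\tilde A-A]=0$ for each $q$.
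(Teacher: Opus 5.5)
\textbf{Part (ii)} is the paper's argument and is fine. Invariance of $\bar c_{\mathcal G}-c_\theta(o_i\mid q)$ under $c_j\mapsto c_j+b(q)$ is exactly the cancellation of $f(q)$ in the paper's decomposition $c_\theta(o\mid q)=f(q)+\delta(o)$. The zero-mean remark is true, but it does not by itself control cross-prompt gradient magnitudes, so let (ii) rest on the invariance.

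\textbf{Part (i)} has a genuine gap, at the step you flagged, and your fallback does not close it. In your softmax model the ``terms from $p_i\neq p_j$'' are not lower order. Indeed, $p_i\tilde A_i-p_j\tilde A_j=\alpha p_j(c_j-c_i)+(p_i-p_j)\tilde A_i$, and both pieces are first order in $z_i-z_j$ with opposite signs. So the sign argument alone proves nothing. Linearize around the uniform point in relative coordinates $\epsilon_i=z_i-\frac1K\sum_{j\in\mathcal O^+}z_j$; the variations of $\log Z$ and $\bar c_{\mathcal G}$ are common to all $i$ and drop out. With $p_i\approx p^*(1+\epsilon_i)$ and $c_i\approx c^*+c'\epsilon_i$ you get
\[
\dot\epsilon_i=p^*\bigl(\tilde A^*-\alpha c'\bigr)\epsilon_i,\qquad \tilde A^*=A_{\text{pos}}+\alpha\bigl(\bar c_{\mathcal G}-c^*\bigr),
\]
not $\dot\epsilon_i=-\alpha p^*c'\epsilon_i$. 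The term $A_{\text{pos}}$ is constant on $\mathcal O^+$, but it multiplies $p_i$, whose first-order variation $p^*\epsilon_i$ is not absorbed by any normalization. Check $\alpha=0$: your eigenvalue would be $0$, whereas the correct one is $p^*A_{\text{pos}}>0$. That is precisely the positive feedback of Lemma~\ref{lemma:homogeneity} you contrast against.

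Consequently, in your model (i) holds only when $\alpha c'>A_{\text{pos}}+\alpha(\bar c_{\mathcal G}-c^*)$. This threshold does not get easier near the uniform point. Your ``spread of $p$'' condition vanishes there, so it has the wrong form. Your own monotonicity justification gives $c'=c^*/|\mathcal T^{\text{low}}|\le 1/|\mathcal T^{\text{low}}|$, and $c^*-\bar c_{\mathcal G}\le (G-K)/G$. The condition therefore requires $\alpha\bigl(1/|\mathcal T^{\text{low}}|+(G-K)/G\bigr)>A_{\text{pos}}=\sqrt{(G-K)/K}$. This fails for $G=8$, $\alpha=0.3$ at every $1\le K\le 7$. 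So in your model the uniform point is linearly unstable at the paper's setting, and no Lyapunov function can rescue the unconditional claim.

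The paper sidesteps this rather than solving it. Its Part~1 only notes that $c_\theta(o_1\mid q)>c_\theta(o_2\mid q)$ implies $\tilde A(o_1)<\tilde A(o_2)$, and then identifies equilibrium with $\tilde A(o_1)=\tilde A(o_2)$. That implicitly models each correct path's logit update as proportional to its advantage alone, with no $p_i$ weight. Under that model $\tfrac{d}{dt}(z_i-z_j)=-\alpha(c_i-c_j)$, and the contraction is immediate. To prove the statement as written, you must adopt that idealization explicitly, noting that it also switches off the rich-get-richer mechanism behind Lemma~\ref{lemma:homogeneity}. Otherwise, state (i) as a conditional result with the corrected threshold.
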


\begin{proof}
\textbf{Part 1: Negative Feedback for Diversity.} 
Consider $o_1, o_2 \in \mathcal{O}^+$. If $c_\theta(o_1 \mid q) > c_\theta(o_2 \mid q)$, then $\tilde{A}(o_1) < \tilde{A}(o_2)$. This differential advantage ensures that over-optimized paths receive a smaller reinforcement signal than under-explored ones. Equilibrium is reached only when $\tilde{A}(o_1) = \tilde{A}(o_2)$, implying $c_\theta(o_1 \mid q) = c_\theta(o_2 \mid q)$, which corresponds to a uniform distribution over the success manifold.

\textbf{Part 2: Why Prompt-Specific $\bar{c}_\mathcal{G}$ is Essential.} 
Let $D(q)$ represent the intrinsic difficulty of prompt $q$. For easy prompts, the model's absolute confidence $c_\theta$ is naturally high, while for hard prompts, $c_\theta$ is low. 
\begin{itemize}
    \item \textit{Failure of Global Baseline:} If we used a fixed global threshold $\tau$ instead of $\bar{c}_\mathcal{G}$, the term $(\tau - c_\theta)$ would be consistently negative for all easy prompts (penalizing correct answers) and positive for all hard prompts (ignoring diversity).
    \item \textit{Calibration via $\bar{c}_\mathcal{G}$:} By defining the advantage relative to the group mean $\bar{c}_\mathcal{G}$, we isolate the \textbf{intra-prompt path discrepancy} from the \textbf{inter-prompt difficulty noise}. 
\end{itemize}
Mathematically, let $c_\theta(o \mid q) = f(q) + \delta(o)$, where $f(q)$ is the prompt difficulty component and $\delta(o)$ is the path-specific variation. Then:
\begin{equation}
    \bar{c}_\mathcal{G} - c_\theta(o_i \mid q) = \left( f(q) + \frac{1}{G}\sum \delta(o_j) \right) - \left( f(q) + \delta(o_i) \right) = \bar{\delta} - \delta(o_i)
\end{equation}
The prompt-specific bias $f(q)$ is canceled out. This ensures that the diversity pressure is applied consistently across the entire dataset, regardless of whether a prompt is easy or difficult.
\end{proof}

\subsection{Theorem 2: Semantic Diversity vs. Syntactic Fluency}

\begin{theorem}
AMR induces semantic-level diversity on reasoning paths while preserving the syntactic certainty of functional segments.
\end{theorem}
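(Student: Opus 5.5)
We will make the informal statement precise through the structure of the confidence score in Eq.~\ref{answer_conf}, and then carry out a token-level gradient decomposition. Partition the positions of a response $o_i$ into the low-probability set $\mathcal{T}^{\text{low}}_{o_i}$, the roughly 20\% of highest-uncertainty "forking" tokens, and the complement $\mathcal{T}^{\text{high}}_{o_i}$ of functional or syntactic tokens.

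We formalize the two claims as follows.
\begin{itemize}
\item \emph{Semantic diversity}: the re-weighting term $\alpha(\bar{c}_\mathcal{G}-c_\theta(o_i\mid q))$ produces a differential update among correct trajectories. Through $c_\theta$, this differential depends only on the token probabilities indexed by $\mathcal{T}^{\text{low}}$.
\item \emph{Syntactic fluency}: the expected change of $\log\pi_\theta$ on tokens in $\mathcal{T}^{\text{high}}$ matches standard GRPO up to a term that vanishes as their probabilities approach $1$.
\end{itemize}

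First, we write the policy-gradient contribution of a correct response as
\[
\tilde{A}(o_i)\sum_t \nabla_\theta\log\pi_\theta(o_{i,t}\mid q,o_{i,<t}).
\]
We decompose $\tilde{A}(o_i)=A_{\text{pos}}+\alpha(\bar{\delta}-\delta(o_i))$, using the prompt cancellation from Part 2 of Theorem~\ref{thm:equilibrium}. The $A_{\text{pos}}$ part is the homogeneous GRPO update of Lemma~\ref{lemma:homogeneity}, so only the $\alpha$-part can alter relative path probabilities.

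Second, we observe that $\log c_\theta(o_i\mid q)$ is the mean log-probability over $\mathcal{T}^{\text{low}}_{o_i}$. Two correct responses that share a prefix and differ only after a forking token therefore have different $\delta$ exactly because of their choices at low-probability positions. By Part 1 of Theorem~\ref{thm:equilibrium}, the negative feedback then equalizes the branch probabilities at those forks. This flattens the next-token distribution at decision points, which is a higher-level, path-level (semantic) diversity, rather than uniform token noise.

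Third, for the fluency claim, we use the softmax identity for the logit gradient: the gradient of $\log p(y)$ with respect to the logits is $e_y-p$. Its norm is $O(1-p_{y})$ when the top token has $p_y\ge 0.9$. Hence the additional signal $\alpha(\bar\delta-\delta(o_i))$ injected at positions in $\mathcal{T}^{\text{high}}$ has magnitude $O(\alpha(1-p))$, which is tiny. Moreover, $c_\theta$ ignores those positions, so the re-weighting never rewards lowering their probability.

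Combining these steps, the entropy change from the $\alpha$-term concentrates on $\mathcal{T}^{\text{low}}$, while the functional tokens keep near-deterministic distributions. A first-order entropy-change formula of the covariance type, $\Delta H\approx-\mathrm{Cov}(\log\pi,\tilde{A})$, applied separately on the two token classes, would make this quantitative. On forks the covariance becomes negative, because high-confidence paths receive smaller advantage, so entropy rises there. On $\mathcal{T}^{\text{high}}$ the covariance is negligible.

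The main obstacle is the second step. Parameter sharing means that updating logits at forking positions also moves probabilities at functional positions, so the clean token-class separation holds only at the logit level or under a tabular/softmax-per-state parametrization. We plan to first prove the result in the tabular softmax model and then argue the general case heuristically via small learning rates. The bound will also need the assumption that $\mathcal{T}^{\text{low}}$ is stable across the update.
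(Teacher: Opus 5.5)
Your argument holds at the level of rigor the paper itself uses, but you reach the fluency half by a different route. The paper's proof is a pure independence argument. Functional tokens have $p_\theta\approx 1$, hence $S_{\text{func}}\cap\mathcal{T}^{\text{low}}=\emptyset$, hence $\partial\tilde{A}/\partial\pi(t)=0$ on $S_{\text{func}}$, so AMR ``targets only the branching points''. You keep this only as a side remark (``$c_\theta$ ignores those positions'') and instead decompose the update token by token, which is the more careful choice. In Eq.~\ref{Eq_ProGRPO} the scalar $\tilde{A}_i$ multiplies the ratio of \emph{every} token of $o_i$. So the fact that $c_\theta$ never reads functional positions does not by itself show they are left alone: the AMR shift still rescales their update.

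Your saturation bound $\|e_y-p\|\le\sqrt{2}\,(1-p_y)$ is what actually controls those positions. In a per-state softmax model it makes the $\alpha$-induced change of $\log p_y$ second order in $1-p_y$. It also does not need functional tokens to be excluded from $\mathcal{T}^{\text{low}}$, which a relative bottom-20\% threshold does not guarantee. The paper's route buys brevity; yours buys a genuine bound. The price is the tabular parametrization, stability of $\mathcal{T}^{\text{low}}$, and restricting the claim to truly near-deterministic positions. The complement of $\mathcal{T}^{\text{low}}$ also contains moderate-probability tokens, where the bound is not small.

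On the diversity half you are no more rigorous than the paper. Equal confidences do not force equal branch probabilities at forks, because $c_\theta$ is a geometric mean over a response-specific token set. That step is simply inherited from Part~1 of Theorem~\ref{thm:equilibrium}.

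Two small corrections. First, the homogeneous $A_{\text{pos}}$ part does shift relative path mass; that is exactly the rich-get-richer loop of Lemma~\ref{lemma:homogeneity}. What only the $\alpha$-part does is differentiate the weights. Second, what turns negative at forks is the $\alpha$-contribution $-\alpha\,\mathrm{Cov}(\log\pi,c_\theta)$ to the covariance, not necessarily the total covariance. Note also that $-\mathrm{Cov}(\log\pi,\tilde{A})$ is the natural-gradient form; vanilla softmax policy gradient gives $-\mathrm{Cov}(\log\pi,\pi\tilde{A})$.
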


\begin{proof}
Let an output $o$ be decomposed into functional tokens $S_{\text{func}}$ (e.g., ``The answer is'') and reasoning tokens $S_{\text{reason}}$. The confidence $c_\theta$ is computed over the low-probability set $\mathcal{T}^{\text{low}}$:
\begin{equation}
    \mathcal{T}^{\text{low}} = \{ t \in [1, |o|] \mid p_\theta(o_t \mid o_{<t}) < \text{bottom-}20\% \text{ threshold} \}
\end{equation}
For functional tokens, $p_\theta(t) \approx 1$ due to grammatical determinism, hence $S_{\text{func}} \cap \mathcal{T}^{\text{low}} = \emptyset$. Consequently:
\begin{equation}
    \frac{\partial \tilde{A}}{\partial \pi(t)} = 0, \quad \forall t \in S_{\text{func}}
\end{equation}
Unlike standard Entropy Maximization which penalizes all tokens, AMR targets only the branching points in $S_{\text{reason}}$, protecting the model's linguistic fluency.
\end{proof}

\subsection{Theorem 3: Preservation of Correctness}

\begin{theorem}
The AMR mechanism strictly bounds exploration within the valid reward landscape and does not encourage incorrect responses $o \in \mathcal{O}^-$.
\end{theorem}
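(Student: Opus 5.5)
The plan is to make ``does not encourage incorrect responses'' precise as two sign conditions and prove each from the definition of $\tilde{A}$ in Eq.~\ref{main_equa} together with the binary-reward structure of Lemma~\ref{lemma:homogeneity}. The conditions are: (a) every incorrect response keeps a strictly negative advantage, so its log-likelihood is always pushed down; and (b) every correct response keeps a strictly positive advantage, so the re-weighting only redistributes mass inside $\mathcal{O}^+$ and never reverses the sign the verifier assigns.

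First I would dispose of the degenerate groups. If $K=0$ or $K=G$, Algorithm~\ref{alg:progrpo} sets $\tilde{A}_i=A_i$. With $K=0$ the rewards are all zero, so $A_i=0$ and nothing incorrect is reinforced. With $K=G$ there are no incorrect responses at all, so the claim is vacuous.

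For $0<K<G$, I would compute $A_{\text{neg}}=-\mu/\sigma<0$ and $A_{\text{pos}}=(1-\mu)/\sigma>0$, using $\sigma$ as defined in the appendix. The perturbation is $\alpha(c_\theta(q)-c_\theta(o\mid q))$, or equivalently $\alpha(\bar c_{\mathcal{G}}-c_\theta(o\mid q))$ in the form used for Theorem~\ref{thm:equilibrium}. Since each confidence is the exponential of an average log-probability, $c_\theta\in(0,1]$, so the perturbation lies in $(-\alpha,\alpha)$. Hence $\tilde{A}(o)\le A_{\text{neg}}+\alpha$ for $o\in\mathcal{O}^-$ and $\tilde{A}(o)\ge A_{\text{pos}}-\alpha$ for $o\in\mathcal{O}^+$.

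The key step is to minimize $|A_{\text{neg}}|=\sqrt{p/(1-p)}$ and $A_{\text{pos}}=\sqrt{(1-p)/p}$ over the admissible ratios $p=K/G\in\{1/G,\dots,(G-1)/G\}$. Both are at least $1/\sqrt{G-1}$. For $G=8$ this is $1/\sqrt7\approx0.378>0.3$, the value of $\alpha$ the paper selects. So the theorem holds under the explicit condition $\alpha<1/\sqrt{G-1}$, ignoring the small $\epsilon$ in $\sigma$, and I would state the result with this hypothesis. With signs preserved, the policy-gradient contribution $\tilde{A}(o)\nabla\log\pi_\theta(o)$ for $o\in\mathcal{O}^-$ always decreases $\log\pi_\theta(o)$. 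Under the PPO clip, a negative advantage can only push the ratio down, so no incorrect path gains mass through its own term.

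The main obstacle is that the bound needs a hypothesis on $\alpha$ that the statement leaves implicit. It also cannot rule out indirect effects: the softmax coupling through shared parameters, where pushing correct paths up could incidentally raise an incorrect one. I would handle the second issue only at the level of per-sample gradient signs, which is the sense of ``encourage'' used in Lemma~\ref{lemma:homogeneity}, and state this scope explicitly. If I used the group-mean baseline instead, I would additionally note that $\sum_{o\in\mathcal{G}}(\bar c_{\mathcal{G}}-c_\theta(o\mid q))=0$. This means the re-weighting adds no net mass to the group, which supports the interpretation that exploration is confined within the reward landscape.
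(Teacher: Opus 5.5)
Your proposal is correct and takes essentially the paper's route: write $\tilde{A}_{\text{neg}}=A_{\text{neg}}+\alpha(\bar c_{\mathcal{G}}-c_\theta(o_{\text{neg}}\mid q))$ with $A_{\text{neg}}=-\mu/\sigma<0$, and keep $\alpha$ below $|A_{\text{neg}}|/\sup|\bar c_{\mathcal{G}}-c_\theta|$ so the sign cannot flip. The paper only argues informally that $\alpha$ is small and $|A_{\text{neg}}|$ is large in early-to-mid training; in fact $|A_{\text{neg}}|=\sqrt{p/(1-p)}$ depends only on $p=K/G$ and is smallest when few responses are correct, so your worst case $p=1/G$ with $c_\theta\in(0,1]$ is sharper, giving the explicit uniform condition $\alpha<1/\sqrt{G-1}$ (met by $\alpha=0.3$, $G=8$), and you also cover the correct-response sign and the degenerate groups, which the paper omits.
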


\begin{proof}
For an incorrect response $o_{\text{neg}}$ where $r(o_{\text{neg}})=0$, the base advantage is $A_{\text{neg}} = -\mu/\sigma < 0$. Under AMR:
\begin{equation}
    \tilde{A}_{\text{neg}} = A_{\text{neg}} + \alpha \left( \bar{c}_\mathcal{G} - c_\theta(o_{\text{neg}} \mid q) \right)
\end{equation}
To ensure $o_{\text{neg}}$ remains suppressed, we require $\tilde{A}_{\text{neg}} < 0$. This is satisfied when:
\begin{equation}
    \alpha < \frac{|A_{\text{neg}}|}{\sup | \bar{c}_\mathcal{G} - c_\theta |}
\end{equation}
Since $|A_{\text{neg}}|$ is significantly negative in the early-to-mid training stages and $\alpha$ is a small hyperparameter (e.g., 0.1), the sign of the gradient remains negative. Thus, AMR acts as a modulation of the penalty magnitude rather than a reversal of the objective, ensuring incorrect paths are never reinforced.
\end{proof}

\subsection{Theorem 4: Implicit Entropy Regularization on the Success Manifold}

\begin{theorem}
\label{thm:entropy}
Under the Reinforcement Learning with Verifiable Rewards (RLVR) setting, 
the Advantage Modulation Rule (AMR) in ProGRPO induces an \emph{implicit entropy-maximizing bias} over the set of correct solutions $\mathcal{O}^+$.
Specifically, while preserving correctness, AMR promotes a high-entropy policy over $\mathcal{O}^+$, thereby mitigating mode collapse.
\end{theorem}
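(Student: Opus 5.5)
I would make the statement precise in an idealized tabular model of the success manifold. Fix a prompt $q$ and regard $\pi_\theta(\cdot\mid q)$ restricted to $\mathcal{O}^+$ as a softmax over sequence-level logits $\{z_o\}_{o\in\mathcal{O}^+}$. I would then identify the confidence $c_\theta(o\mid q)$ with a strictly increasing function $g(\pi_\theta(o\mid q))$. This is the natural surrogate for the low-probability length-normalized likelihood of Eq.~\ref{answer_conf}, though I would state it explicitly as an assumption. Under the ProGRPO advantage $\tilde{A}(o)=A_{\text{pos}}+\alpha(\bar{c}_\mathcal{G}-c_\theta(o\mid q))$, the expected policy-gradient flow on the logits is
\begin{equation}
\dot z_o \;\propto\; \pi(o)\Big(\tilde{A}(o)-\textstyle\sum_{o'}\pi(o')\tilde{A}(o')\Big).
\end{equation}
By Lemma~\ref{lemma:homogeneity}, the $A_{\text{pos}}$ part is constant on $\mathcal{O}^+$ and cancels inside the bracket. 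Only the modulation term $-\alpha\, g(\pi(o))$, plus terms independent of $o$, drives the redistribution within $\mathcal{O}^+$.

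The core step is a Lyapunov argument using the conditional entropy $H^+=-\sum_{o\in\mathcal{O}^+}\pi^+(o)\log\pi^+(o)$, where $\pi^+$ is $\pi$ renormalized on $\mathcal{O}^+$. Differentiating along the flow gives
\begin{equation}
\dot H^+ \;=\; -\operatorname{Cov}_{\pi^+}\big(\log\pi^+(o),\,\dot z_o\big)\cdot(\text{positive const}) .
\end{equation}
With $\dot z_o$ proportional, up to an $o$-independent shift, to $-\alpha\,\pi(o)g(\pi(o))$, this is a covariance between two comonotone functions of $\pi(o)$. Chebyshev's sum inequality (the covariance of comonotone functions is nonnegative) then yields $\dot H^+\ge 0$, with equality iff $\pi^+$ is uniform. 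This is exactly Part~1 of Theorem~\ref{thm:equilibrium}, promoted from a pairwise comparison to a global monotone quantity. LaSalle's principle then gives convergence to the maximum-entropy (uniform) distribution on $\mathcal{O}^+$.

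To cover the "while preserving correctness" clause, I would invoke Theorem~3. Under its bound on $\alpha$, incorrect responses keep $\tilde{A}<0$, so the total mass on $\mathcal{O}^+$ is non-decreasing and the entropy gain is not bought by leaking mass to $\mathcal{O}^-$. I would also use Part~2 of Theorem~\ref{thm:equilibrium}: subtracting the prompt-specific baseline $\bar{c}_\mathcal{G}$ makes this bias apply uniformly across prompts, so the per-prompt statement aggregates over $\mathcal{D}$.

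The main obstacle is the comonotonicity step. Within the bracket, $\dot z_o$ contains both the factor $\pi(o)$ and the factor $-g(\pi(o))$. The product $\pi\cdot(\text{const}-\alpha g(\pi))$ is not automatically monotone in $\pi$: since the positive $A_{\text{pos}}$ component enters multiplied by $\pi(o)$ before centering, it pushes toward the dominant mode. Plain GRPO has no counterweight to this, which is the feedback loop described in Lemma~\ref{lemma:homogeneity}. The honest result is therefore likely conditional. My first attempt would work directly in log-probability space, using the natural-gradient or per-sample normalized update where the $\pi(o)$ prefactor is absent. There the covariance is cleanly $\operatorname{Cov}(\log\pi^+,-\alpha g(\pi))\le 0$, which gives $\dot H^+\ge0$ outright. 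If that fails, I would fall back to a sufficient condition: $\alpha\, g'(\pi)\,\pi$ must dominate $A_{\text{pos}}$ on the relevant range, so that the net drift on $\mathcal{O}^+$ is decreasing in $\pi$.
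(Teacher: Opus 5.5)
Your route differs from the paper's. The paper discards $A_{\text{pos}}$ (``ignoring the constant term''). It then asserts that the remaining direction $\sum_i\alpha(\bar c-c_i)\nabla_\theta\log\pi_\theta(o_i\mid q)$ is aligned with decreasing the KL divergence to the uniform law on $\mathcal{O}^+$. Finally it only identifies the stationary point ($\tilde A$ constant on $\mathcal{O}^+$, hence equal confidences, hence uniform). It never shows that this point is attracting, or that $H^+$ is monotone. Your Lyapunov argument supplies exactly that, and it is correct in the flow you end up choosing, $\dot z_o=\tilde A(o)$ (the natural-gradient flow of the tabular softmax objective, up to a common shift). There $A_{\text{pos}}$ and the confidence baseline really are $o$-independent shifts on $\mathcal{O}^+$, so $\dot H^+=\alpha\operatorname{Cov}_{\pi^+}\bigl(\log\pi^+,g(\pi)\bigr)\ge 0$ by the weighted Chebyshev inequality, with equality iff $\pi^+$ is uniform. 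Moreover the least likely correct response receives the largest $\dot z_o$, so $\min_o\pi^+(o)$ is non-decreasing, the orbit stays in a compact subset of the interior, and LaSalle applies. This gives monotonicity and global convergence, which the paper does not establish. In fact the paper's stationarity condition $\tilde A(o_i)=\tilde A(o_j)$ is itself the natural-gradient one. For the Euclidean flow it would be constancy of $\pi(o)\bigl(\tilde A(o)-\bar A\bigr)$ on $\mathcal{O}^+$, with $\bar A=\sum_{o'}\pi(o')\tilde A(o')$.

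Several points still need repair.

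First, your second paragraph is false for the Euclidean flow you display. $A_{\text{pos}}-\bar A$ and $\alpha\bar c_{\mathcal{G}}$ both enter multiplied by $\pi(o)$, so neither is an $o$-independent shift. State the argument for $\dot z_o=\tilde A(o)$ from the start.

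Second, the Euclidean fallback is not merely conditional: the claim fails there at the paper's $\alpha$. Take population statistics, two correct responses with probabilities $p_1>p_2$, failure mass $m$, and $g(p)=p$. For any $o$-independent baseline one gets $\dot z_1-\dot z_2=(p_1-p_2)\bigl(A_{\text{pos}}+\alpha\,\mathbb{E}_\pi[c]-\alpha(1-m)\bigr)$. At $\alpha=0.3$, $m=0.1$ we have $A_{\text{pos}}=1/3>0.27=\alpha(1-m)$, so the bracket is positive and $H^+$ decreases.

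The non-monotonicity of $\pi(\mathrm{const}-\alpha g(\pi))$ is not itself the obstruction. The part centred on $\mathcal{O}^+$, namely $\alpha\pi(o)\bigl(\mathbb{E}_{\pi^+}c-c(o)\bigr)$, still contributes to $\dot H^+$ a positive multiple of $\operatorname{Cov}_w(c,\log\pi^+)+(\mathbb{E}_wc-\mathbb{E}_{\pi^+}c)(\mathbb{E}_w\log\pi^+-\mathbb{E}_{\pi^+}\log\pi^+)\ge0$, where $w\propto(\pi^+)^2$. What wins is the residual drift $\pi(o)\bigl(A_{\text{pos}}+\alpha(\mathbb{E}_\pi c-\mathbb{E}_{\pi^+}c)\bigr)$, which contributes a negative multiple of $\operatorname{Cov}_{\pi^+}(\log\pi^+,\pi^+)\ge0$. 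This is precisely the term the paper drops in Step~2 after blaming it for collapse in Step~1.

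Conversely, in the natural-gradient model plain GRPO leaves $\pi^+$ exactly invariant. So what you prove is that the entropy-increasing bias exists, not that it overcomes a collapse present in the same model; say so explicitly.

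Third, the correctness clause does not follow from Theorem~3. The sign of $\tilde A_{\text{neg}}$ is neither necessary nor sufficient for $\pi(\mathcal{O}^+)$ to be non-decreasing. In the natural-gradient flow, $\frac{d}{dt}\pi(\mathcal{O}^+)=\pi(\mathcal{O}^+)\pi(\mathcal{O}^-)\bigl(\mathbb{E}_{\pi^+}\tilde A-\mathbb{E}_{\pi^-}\tilde A\bigr)$. This gap exceeds $1/\sigma-\alpha>0$, because $A_{\text{pos}}-A_{\text{neg}}=1/\sigma$, $\sigma\le\tfrac12+\epsilon$ and $c\in(0,1]$. Use that argument instead.
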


\begin{proof}
\textbf{1. Entropy collapse in standard GRPO.}
In standard GRPO, Lemma~\ref{lemma:homogeneity} shows that all correct solutions 
$o \in \mathcal{O}^+$ share the same positive advantage $A_{\text{pos}}$.
The policy gradient restricted to $\mathcal{O}^+$ is therefore
\begin{equation}
\nabla_\theta \mathcal{J}_{\text{GRPO}}\big|_{\mathcal{O}^+}
\propto
\sum_{o_i \in \mathcal{O}^+} 
A_{\text{pos}} \nabla_\theta \log \pi_\theta(o_i \mid q).
\end{equation}
This corresponds to a uniform likelihood amplification over competing correct trajectories.
From an information-theoretic perspective, such undifferentiated reinforcement causes probability mass to concentrate on the initially slightly more likely paths (e.g., shorter or syntactically simpler solutions), leading to entropy collapse:
\[
H(\pi_\theta \mid \mathcal{O}^+) \to 0.
\]

\textbf{2. AMR as an entropy-promoting mechanism.}
Consider the Shannon entropy of the policy restricted to $\mathcal{O}^+$:
\begin{equation}
H_{\mathcal{O}^+}
= -\sum_{i=1}^{K} p_i \log p_i,
\quad
p_i
=
\frac{\pi_\theta(o_i \mid q)}
{\sum_{j \in \mathcal{O}^+} \pi_\theta(o_j \mid q)}.
\end{equation}
The gradient of $H_{\mathcal{O}^+}$ encourages the distribution to become uniform over $\mathcal{O}^+$.

In ProGRPO, the modulated advantage for correct solutions is
\begin{equation}
\tilde{A}(o_i)
=
A_{\text{pos}}
+
\alpha \left(
\mathbb{E}_{j \sim \mathcal{G}}[c_\theta(o_j \mid q)]
-
c_\theta(o_i \mid q)
\right),
\end{equation}
where the confidence score $c_\theta(o_i \mid q)$ serves as a monotonic proxy for
$\log \pi_\theta(o_i \mid q)$ (e.g., defined as an average of token-level probabilities).

Ignoring the constant term $A_{\text{pos}}$, the AMR-induced gradient direction is
\begin{equation}
\nabla_\theta \mathcal{J}_{\text{AMR}}
\propto
\sum_{o_i \in \mathcal{O}^+}
\alpha (\bar{c} - c_i)
\nabla_\theta \log \pi_\theta(o_i \mid q),
\end{equation}
where $\bar{c}$ denotes the group-average confidence.
Paths with higher-than-average confidence are down-weighted, while lower-confidence paths are amplified.
This update direction is aligned with that of minimizing the KL divergence between the policy restricted to $\mathcal{O}^+$ and the uniform distribution, thereby implicitly encouraging higher entropy.

\textbf{3. Stationary points.}
At a stationary point, $\nabla_\theta \mathcal{J} = 0$, which within $\mathcal{O}^+$ requires
\[
\tilde{A}(o_i) = \tilde{A}(o_j), \quad \forall i,j.
\]
By the definition of AMR, this implies
\[
c_\theta(o_i \mid q) = c_\theta(o_j \mid q), \quad \forall i,j,
\]
indicating equalized confidence (and thus probability mass) across all correct solutions.
Consequently, the induced policy attains a maximum-entropy configuration over $\mathcal{O}^+$.

Therefore, ProGRPO preserves correctness while implicitly steering the policy toward a high-entropy distribution on the success manifold, effectively preventing mode collapse.
\end{proof}

\section{Additional Results}

\begin{figure}[H]
    \centering
    \includegraphics[width=0.7\linewidth]{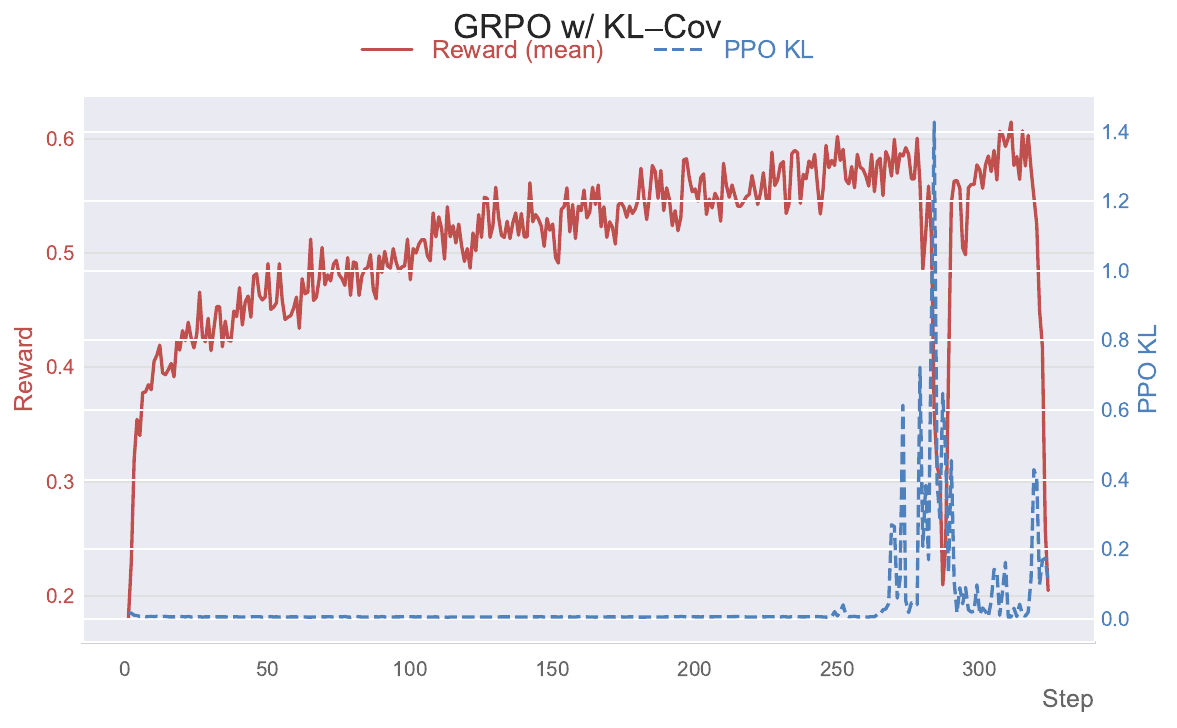}
    \caption{Reproduction of the best-performing method proposed in Entropy Mechanism \citep{cui2025entropy}}
    \label{entropy-based-rl-kl-cov}
\end{figure}

We reproduced the method proposed by \citet{cui2025entropy}. During our experiments, we observed that although the method can achieve the reported performance in some cases, the training process is extremely unstable and prone to collapse, as show in Figure~\ref{entropy-based-rl-kl-cov}. This highlights the practical challenges of reproducing the approach and motivates the need for more stable alternatives.

\begin{figure}[!htp]
    \centering
    \begin{subfigure}[t]{0.48\linewidth}
        \centering
        \includegraphics[width=\linewidth]{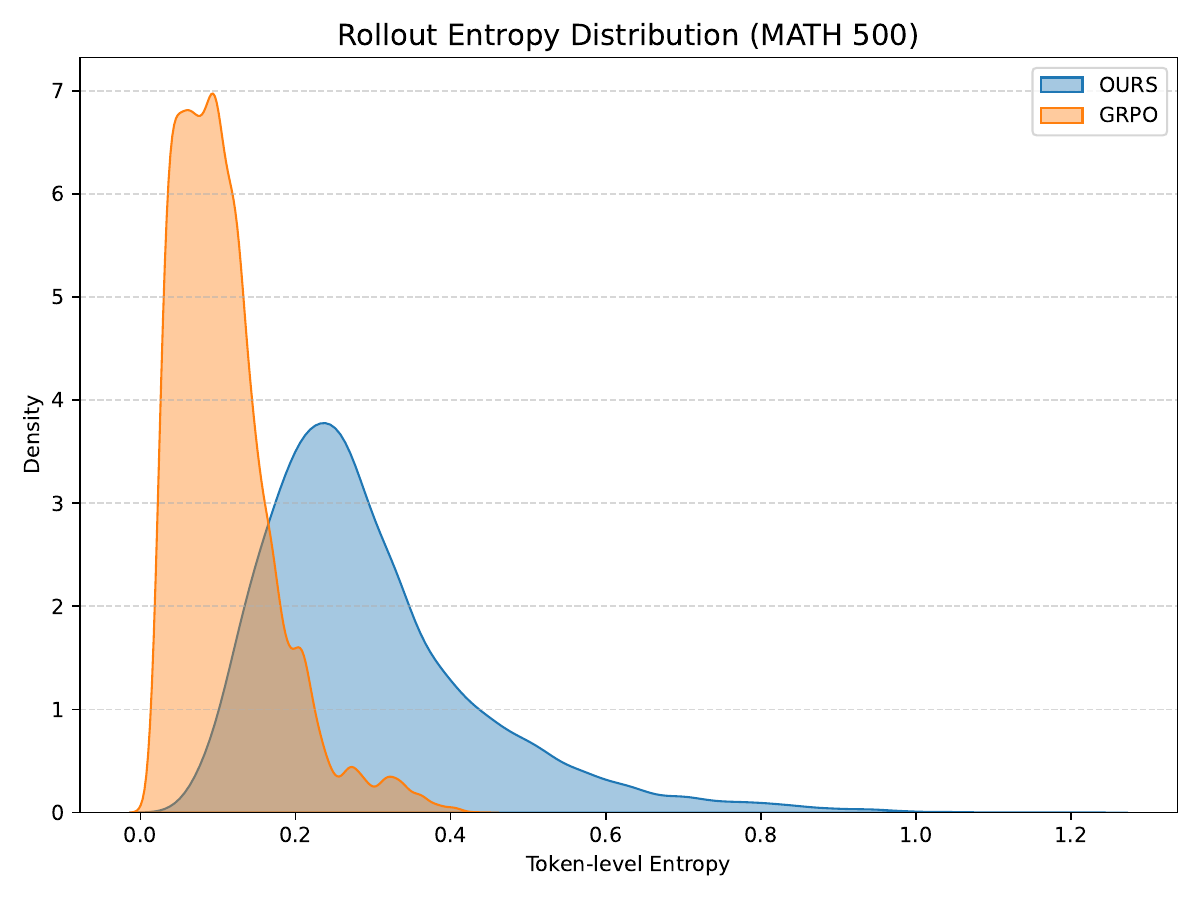}
        \caption{Kernel density estimation (KDE).}
        \label{entropy_kde_math500}
    \end{subfigure}
    \hfill
    \begin{subfigure}[t]{0.48\linewidth}
        \centering
        \includegraphics[width=\linewidth]{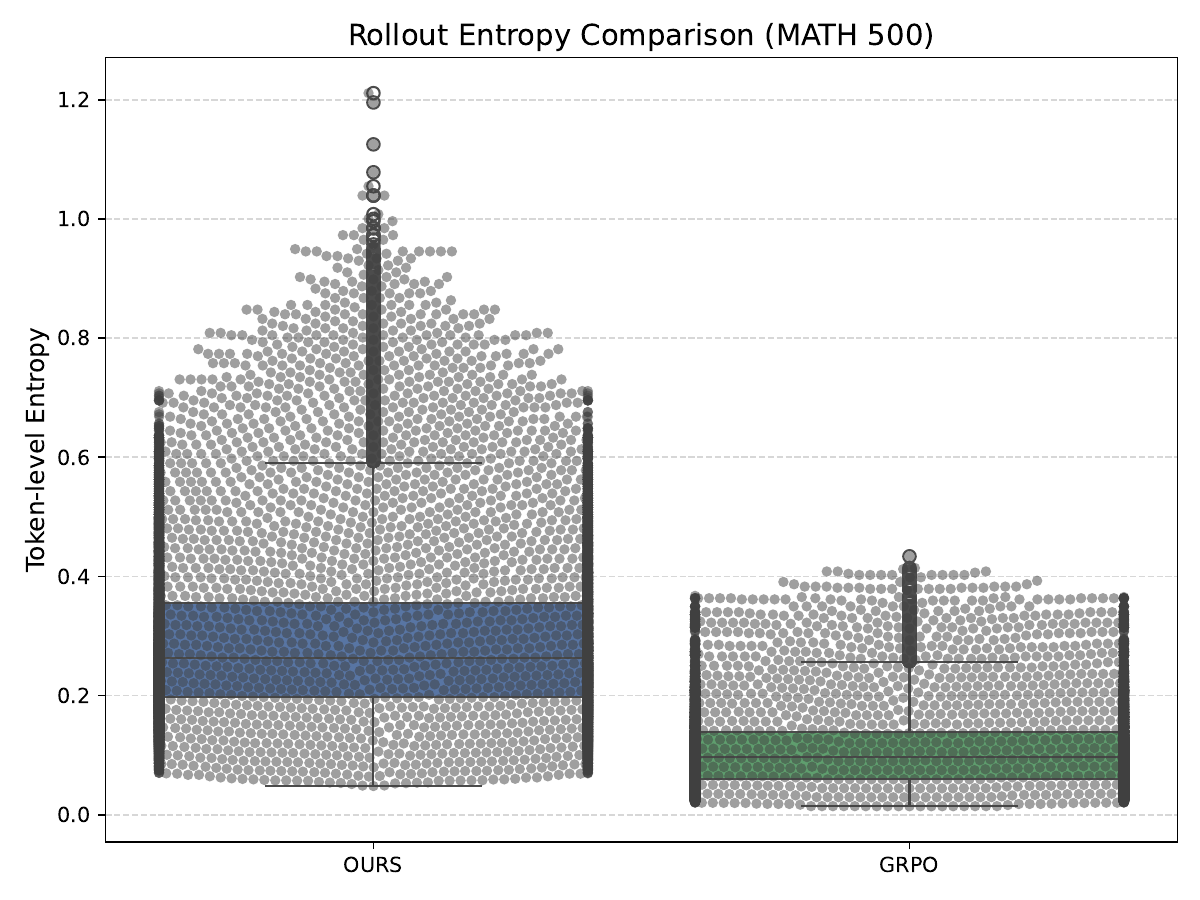}
        \caption{Boxplot of rollout entropy.}
        \label{entropy_boxplot_math500}
    \end{subfigure}
    \caption{Rollout token-level entropy comparison on Math~500 between OURS and the GRPO baseline.}
    \label{rollout_entropy_math500}
\end{figure}

Since the AIME2024 dataset contains only 30 samples, we also computed the entropy on the Math500 dataset (see Figure~\ref{rollout_entropy_math500}), and the results are consistent with the analysis on AIME2024.

\begin{figure}[!ht]
    \centering
    \includegraphics[width=1\linewidth]{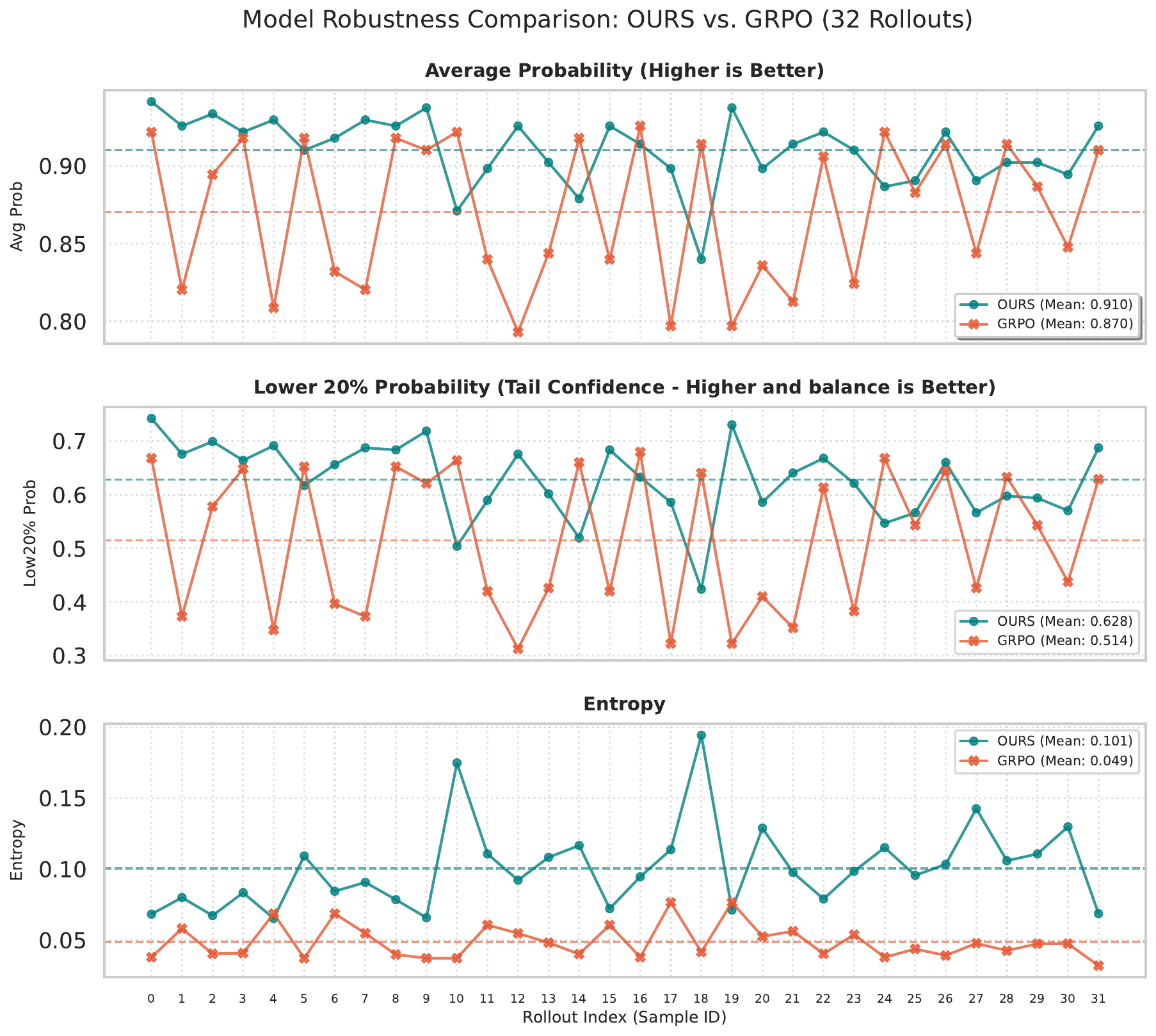}
    \caption{Per-sample analysis of 32 rollouts, showing average token probability and the mean of the lowest 20\% token probabilities.}
    \label{rollout_32}
\end{figure}

Analysis of a single sample from the AIME2024 dataset (Figure~\ref{rollout_32}) shows that our model achieves higher token-level entropy and more balanced probabilities across 32 rollouts. This is consistent with the inter-sample balancing mechanism in Equation~\ref{main_equa}, demonstrating that our model outperforms the GRPO baseline in both reliability and diversity of generation.

\begin{table*}[htp]
\centering
\small
\resizebox{\textwidth}{!}{\begin{tabular}{lccccccc}
\toprule
\textbf{Method} &
\textbf{AIME 2024} &
\textbf{AIME 2025} &
\textbf{AMC 23} &
\textbf{MATH 500} &
\textbf{Minerva} &
\textbf{Olympiad} &
\textbf{Avg} \\
\midrule

Baseline 
& 5.4 / 30.0 
& 2.5 / 20.0 
& 32.4 / 82.5 
& 54.7 / 92.6 
& 22.0 / \textbf{54.4}
& 24.6 / 63.1 
& 23.6 / 57.1 \\
        
GRPO
& 9.2 / 26.7
& 6.1 / 30.0
& 65.5 / 80.0
& 75.3 / 87.0
& 33.8 / 51.1
& 35.6 / 53.6
& 37.6 / 54.7 \\

with $1 - c_\theta(o_j \mid q_i)$
& 12.5 / 33.3
& 9.7 / 30.0
& 66.3 / \textbf{95.0}
& 73.7 / 91.2
& 32.7 / 52.9
& 36.0 / 61.1
& 38.5 / 60.6 \\

with $1 - c_\theta(q_i) - c_\theta(o_j \mid q_i)$
& 16.6 / 43.3
& 15.2 / 46.7
& \textbf{70.0} / 90.0
& \textbf{81.8} / \textbf{95.0}
& \textbf{36.4} / \textbf{57.4}
& \textbf{45.8} / \textbf{70.0}
& \textbf{44.3} / 67.1 \\

with $c_\theta(q_i) - c_\theta(o_j \mid q_i)$
& \textbf{21.3} / \textbf{53.3}
& \textbf{15.9} / \textbf{50.0}
& 67.2 / 92.5
& 80.5 / 94.2
& 32.0 / 53.3
& 42.7 / 67.5
& 43.3 / \textbf{68.5} \\

\bottomrule
\end{tabular}}
\caption{Ablation study on reward formulation. Each cell reports Acc / Pass@32 (\%).}
\label{ablation_reward_grpo}
\end{table*}

As shown in Table~\ref{ablation_reward_grpo}, these results highlight that relative confidence reweighting, rather than absolute confidence penalties, is essential for selectively attenuating dominant paths while promoting under-explored correct reasoning trajectories.

\section{}




\end{document}